\documentclass{article}

\PassOptionsToPackage{numbers,square,sort&compress}{natbib}

\usepackage[preprint]{neurips_2026}

\usepackage[utf8]{inputenc} 
\usepackage[T1]{fontenc}    
\usepackage{hyperref}       
\usepackage{url}            
\usepackage{booktabs}       
\usepackage{nicefrac}       
\usepackage{microtype}      
\usepackage[table]{xcolor}         
\usepackage{graphicx}
\usepackage{wrapfig}

\usepackage{float}

\usepackage{makecell}
\usepackage{multirow}

\usepackage{caption}
\usepackage{mathtools}

\newcommand{\ignorewidth}[1]{\makebox[0pt][c]{#1}}

\usepackage{amsmath,amsfonts,bm}

\def\eqref#1{equation~\ref{#1}}

\def\1{\bm{1}}

\def\vtheta{{\bm{\theta}}}

\def\vd{{\bm{d}}}

\def\vu{{\bm{u}}}

\def\vx{{\bm{x}}}

\def\vz{{\bm{z}}}

\def\mA{{\bm{A}}}

\def\mI{{\bm{I}}}

\def\mO{{\bm{O}}}
\def\mP{{\bm{P}}}

\def\mU{{\bm{U}}}
\def\mV{{\bm{V}}}

\def\mX{{\bm{X}}}

\def\mZ{{\bm{Z}}}

\def\mSigma{{\bm{\Sigma}}}

\DeclareMathAlphabet{\mathsfit}{\encodingdefault}{\sfdefault}{m}{sl}
\SetMathAlphabet{\mathsfit}{bold}{\encodingdefault}{\sfdefault}{bx}{n}

\newcommand{\E}{\mathbb{E}}

\newcommand{\R}{\mathbb{R}}

\DeclareMathOperator*{\argmax}{arg\,max}
\DeclareMathOperator*{\argmin}{arg\,min}

\DeclareMathOperator{\Tr}{Tr}

\let\originalleft\left
\let\originalright\right
\renewcommand{\left}{\mathopen{}\mathclose\bgroup\originalleft}
\renewcommand{\right}{\aftergroup\egroup\originalright}

\newcommand{\diff}{\mathop{}\!\mathrm{d}}
\let\originalpartial\partial
\renewcommand{\partial}{\mathop{}\!\mathrm{\originalpartial}}

\newcommand{\T}{^\mathrm{T}}

\usepackage{amsthm}

\newtheoremstyle{custom}
{\parskip} 
{0}
{} 
{} 
{\bfseries} 
{.} 
{.5em} 
{} 
\theoremstyle{custom}
\newtheorem{theorem}{Theorem}

\theoremstyle{definition}

\theoremstyle{remark}

\usepackage{enumitem}

\newlength\itemparskip
\newlist{compactitem}{itemize}{1}
\setlist[compactitem]{nosep, leftmargin=1em, itemsep=0pt, parsep=0pt, before={\parskip=\itemparskip}, after=\vspace{-\itemparskip}, label=\textbullet}

\begin{document}

\title{Asymmetric Flow Models}

%

\author{%
  Hansheng Chen \quad Jan Ackermann \quad Minseo Kim \quad Gordon Wetzstein \quad Leonidas Guibas \vspace{1ex}\\
  Stanford University \\
  \url{https://hanshengchen.com/asymflow}
}

\maketitle

\begin{figure}[H]
    \makeatletter
    \if@anonymous
    \vspace{-5ex}
    \else
    \vspace{-4ex}
    \fi
    \makeatother
    \centering
    \makeatletter
    \if@anonymous
    \includegraphics[width=0.8\linewidth]{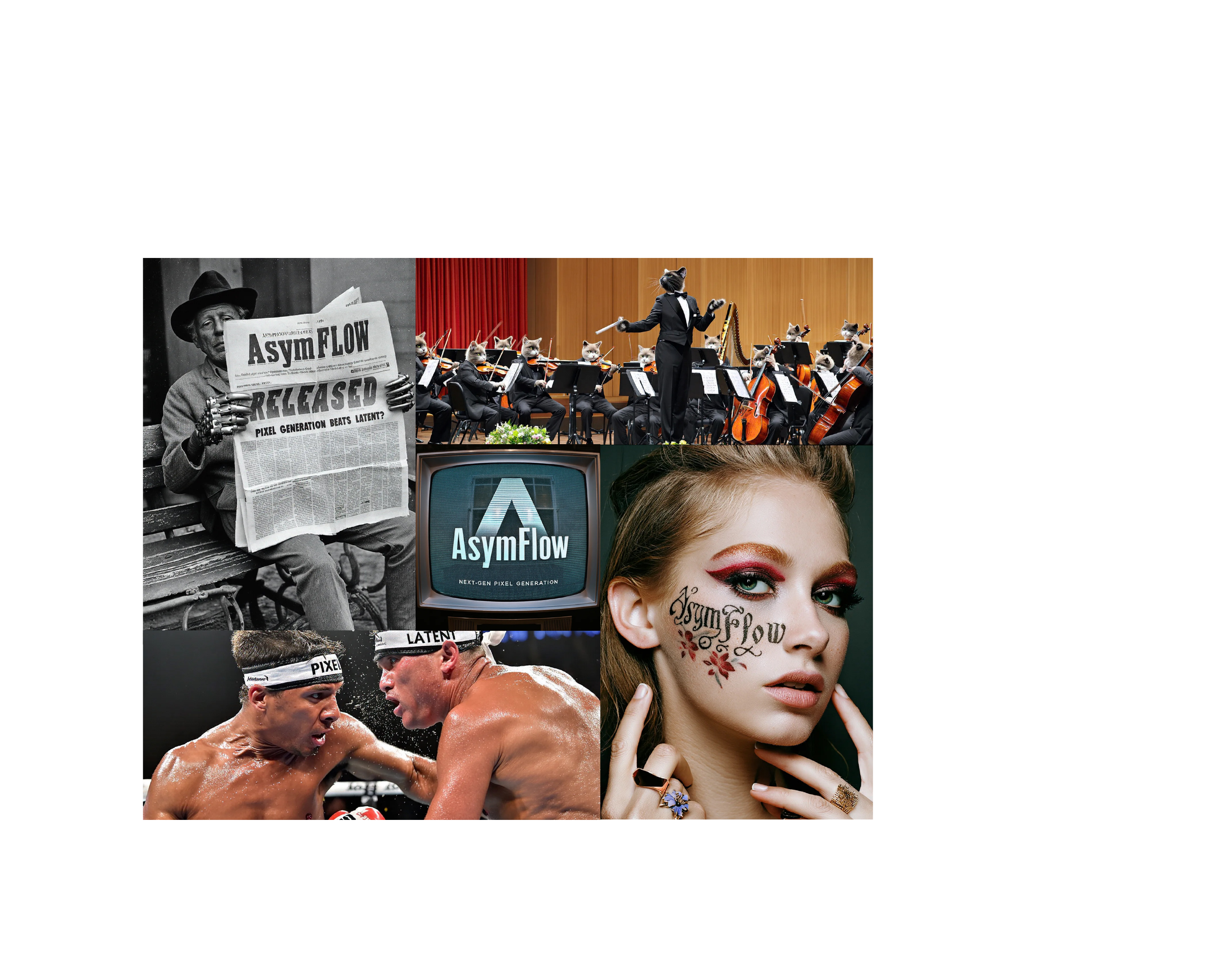}
    \else
    \includegraphics[width=0.82\linewidth]{figures/teaser.pdf}
    \fi
    \makeatother
    \caption{\textbf{AsymFLUX.2 klein generations.} AsymFlow finetunes FLUX.2 klein into a pixel-space flow model, producing highly realistic images with rich visual styles and fine detail.}
    \label{fig:teaser}
\end{figure}

\begin{abstract}
Flow-based generation in high-dimensional pixel spaces is difficult because velocity prediction requires modeling high-dimensional noise, even when data has strong low-rank structure. We present \emph{Asymmetric Flow Modeling} (AsymFlow), a rank-asymmetric velocity parameterization that restricts noise prediction to a low-rank subspace while keeping data prediction full-dimensional. From this asymmetric prediction, AsymFlow analytically recovers the full-dimensional velocity without changing the network architecture or training/sampling procedures. On ImageNet 256\texttimes 256, AsymFlow achieves a leading 1.57 FID, outperforming prior DiT/JiT-like pixel diffusion models by a large margin. AsymFlow also provides the first-ever route for finetuning pretrained latent flow models into pixel-space models: aligning the low-rank pixel subspace to the latent space gives a seamless initialization that preserves the latent model's high-level semantics and structure, so finetuning mainly improves low-level mismatches rather than relearning pixel generation. We show that the pixel AsymFlow model finetuned from FLUX.2 klein 9B establishes a new state of the art for pixel-space text-to-image generation, beating its latent base on HPSv3, DPG-Bench, and GenEval while qualitatively showing substantially improved visual realism.\makeatletter\if@anonymous{} Code and models will be released publicly.\fi\makeatother
\end{abstract}

\section{Introduction}
\label{sec:intro}

Recent progress in diffusion-based image and video generation~\cite{flux2024,wan,hunyuanvideo,ltxvideo,zimage,flux2} has been driven by combining scalable transformer architectures~\cite{dit,videoworldsimulators2024,esser2024sd3} with flow matching objectives~\cite{lipman2023flow,liu2022flow,albergo2023building}. Most state-of-the-art systems operate in compressed lower-dimensional latent spaces learned by autoencoders~\cite{rombach2022ldm}, which is highly scalable but delegates fine detail to a fixed decoder that the generative model cannot control. This limitation motivates a return to high-dimensional generation, including direct pixel-space generation~\cite{li2025jit,chen2025pixelflow,wang2025pixnerd,chen2025dip,yu2025pixeldit,ma2025deco,ma2026pixelgen,baade2026latent,jin2026kdiff}.

However, moving to high-dimensional spaces exposes a bottleneck in velocity prediction. The velocity target $\vu = \bm{\epsilon} - \vx_0$ consists of both data and noise components. To predict it accurately, the network must extract the noise from the input and pass it through its internal features. This is straightforward in latent spaces, where the noise dimension is small relative to the network width. In pixel space, however, the per-patch noise dimension can pollute the network's internal states, creating a bottleneck~\cite{zheng2025rae}. Classical pixel diffusion models used U-Net architectures~\cite{ronneberger2015u,ddpm,dhariwal2021adm,karras2022elucidating,imagen}, whose skip connections naturally route noise from input to output. Modern scalable transformers lack these pathways, so recent methods either reintroduce architectural bypasses, such as U-ViT-like transformers~\cite{bao2023uvit,hoogeboom2023simple,crowson2024scalable,gu2023matryoshka,sid2} or decoder heads~\cite{zheng2025rae, tong2026scaling,wang2025pixnerd,yu2025pixeldit,chen2025dip,ma2025deco}, which complicates the otherwise simple transformer recipe, or switch to predicting clean data $\vx_0$ directly~\cite{li2025jit,ma2026pixelgen,shin2026pixelrepa}, which is numerically ill-conditioned at low noise levels~\cite{karras2022elucidating,salimans2022progressive}.

We introduce \emph{Asymmetric Flow Modeling} (AsymFlow), a new parameterization for high-dimensional flow modeling that avoids both of these compromises. AsymFlow parameterizes the two velocity components asymmetrically: the data component remains full-dimensional, while the noise component is restricted to a low-rank subspace. The full-dimensional velocity is recovered analytically, so standard flow matching training and sampling remain unchanged. In this view, standard $\vx_0$-prediction and $\vu$-prediction are special cases of AsymFlow, corresponding to zero and full rank of this noise subspace, respectively. Between these endpoints, AsymFlow can choose an intermediate rank that keeps velocity prediction in an important subspace while avoiding full-rank noise prediction.

In addition, AsymFlow makes it possible to build large-scale pixel generators by finetuning pretrained latent flow models. The key observation is that latent and pixel spaces are not disconnected: a latent model can be mathematically lifted into a low-rank pixel model whose samples inherit the semantics and structure of the latent generator. This turns latent-to-pixel adaptation into a correction problem, where finetuning keeps the high-level content and only needs to close the low-level projection gap between low-rank pixel outputs and full-rank pixel targets. To our knowledge, this is the first practical path for turning existing large-scale latent flow models themselves into strong pixel generators.

We evaluate AsymFlow in two settings. On ImageNet 256\texttimes 256~\cite{imagenet}, AsymFlow reaches 1.76 FID with the JiT-H/16 network~\cite{li2025jit} and 1.57 FID with an additional REPA loss~\cite{yu2024repa}, outperforming prior DiT/JiT-like pixel diffusion models by a large margin. For text-to-image generation, our pixel AsymFlow model finetuned from FLUX.2 klein 9B~\cite{flux2} sets a new state of the art in pixel-space generation, beating its latent base on HPSv3~\cite{hpsv3}, DPG-Bench~\cite{ella}, and GenEval~\cite{geneval} while qualitatively exhibiting substantially improved visual realism.

To summarize, our main contributions are:
\begin{compactitem}
    \item We introduce AsymFlow, a novel rank-asymmetric flow parameterization with full-rank data and low-rank noise for scalable high-dimensional generation.
    \item We provide the first method of finetuning pretrained latent flow models into pixel models through AsymFlow, using a principled latent-to-pixel lift without architectural modifications.
    \item We achieve a leading 1.57 FID on ImageNet 256\texttimes 256 and demonstrate a 9B-scale pixel-space text-to-image model with state-of-the-art performance.
\end{compactitem}

\section{Related Work}
\label{sec:related}

Recent work mainly addresses the high-dimensional bottleneck in two ways: changing the network architecture so high-dimensional noisy inputs can reach the output more easily, or changing the prediction parameterization to avoid high-dimensional noise prediction.

\textbf{Hierarchical architectures.}
One line of work keeps noise or velocity prediction feasible using hierarchical architectures with high-dimensional bypasses. Classical DDPM/ADM-style U-Nets~\cite{ddpm,dhariwal2021adm,ronneberger2015u} and U-ViT-like hierarchical transformers~\cite{bao2023uvit,hoogeboom2023simple,crowson2024scalable,gu2023matryoshka,sid2} use skip-connected multi-scale structures, while DDT-like decoder-based designs~\cite{wang2025ddt}, including RAE, PixNerd, PixelDiT, DiP, and DeCo~\cite{zheng2025rae,tong2026scaling,wang2025pixnerd,yu2025pixeldit,chen2025dip,ma2025deco}, expose the noisy input to decoder or refiner pathways conditioned on backbone features. These designs are effective, but they complicate the plain transformer recipe that has scaled successfully in large image and video generators~\cite{flux2024,wan,hunyuanvideo,ltxvideo,zimage,flux2}. In contrast, AsymFlow enables high-dimensional generation without architectural modification, making it possible to finetune large-scale latent flow models into pixel space for the first time.

\textbf{Prediction parameterizations.}
In early diffusion models, hierarchical U-Net-like architectures made $\bm{\epsilon}$-prediction practical, while $\vx_0$-prediction was often less favored because of low-noise numerical issues~\cite{ddpm,salimans2022progressive,karras2022elucidating}. With the paradigm shift to plain diffusion transformers (DiT)~\cite{dit,ma2024sit,yao2025vavae},
JiT~\cite{li2025jit} argues that pixel diffusion should predict clean data $\vx_0$ rather than noise or velocity, and several follow-up pixel methods~\cite{ma2026pixelgen,shin2026pixelrepa} adopt the same $\vx_0$-prediction backbone with perceptual or representation-alignment (REPA) losses~\cite{lpips, yu2024repa}. $k$-Diff~\cite{jin2026kdiff} learns a scalar interpolation between $\vx_0$- and $\vu$-prediction, but this isotropic parameterization does not reduce the dimensionality of the noise component and gives results close to JiT. Unlike prior work, AsymFlow treats the prediction target asymmetrically: the data term $\vx_0$ remains full-dimensional, while the noise term $\bm{\epsilon}$ is restricted to a low-rank subspace, which retains the benefits of $\vu$-prediction in a meaningful subspace.

\section{Preliminaries}

We briefly introduce diffusion models~\citep{sohl2015deep,ddpm,song2019score} using the flow matching convention~\cite{lipman2023flow,liu2022flow,albergo2023building}, then review common prediction parameterizations.

\textbf{Flow matching.} Let $\vx_0 \in \R^D$ be a data vector of dimension $D$. A typical flow model defines an interpolation between a data sample and Gaussian noise $\bm{\epsilon} \sim \mathcal{N}(\bm{0}, \mI)$, yielding the noisy sample $\vx_t \coloneqq \alpha_t \vx_0 + \sigma_t \bm{\epsilon}$, where $t \in (0, 1]$ denotes diffusion time and $\alpha_t = 1 - t$, $\sigma_t = t$ define the linear flow schedule. Under this construction, generative modeling is achieved by solving a reverse-time SDE or ODE that transports noise to data~\citep{song2021scorebased,liu2022rectifiedflowmarginalpreserving}. In particular, the ODE velocity is given by $\frac{\diff \vx_t}{\diff t} = \E_{\vx_0\sim p(\vx_0|\vx_t)}\left[\frac{\vx_t - \vx_0}{t}\right]$, which is the posterior mean of the sample velocity $\vu$:
\begin{equation}
    \vu \coloneqq \frac{\vx_t - \vx_0}{\sigma_t} = \bm{\epsilon} - \vx_0.
    \label{eq:velocity_def}
\end{equation}
Then, a model $(\vx_t,t)\mapsto\hat{\vu}$ is trained to estimate this posterior mean with the flow matching loss:
\begin{equation}
    \mathcal{L}_\mathrm{FM} = \E_{t,\vx_0,\bm{\epsilon}} \left[ \left\| \vu - \hat{\vu} \right\|^2 \right].
    \label{eq:fm_loss}
\end{equation}

\textbf{$\vu$-prediction vs. $\vx_0$-prediction.}
The mapping $(\vx_t,t)\mapsto \hat{\vu}$ is often directly parameterized by a neural network, i.e., $\hat{\vu} \coloneqq G_\vtheta(\vx_t, t)$. This $\vu$-prediction form is widely used in modern latent flow models~\cite{rombach2022ldm,dit,esser2024sd3}, where the representation is compressed. When moved to pixels or other high-dimensional representations, however, the target $\vu=\bm{\epsilon}-\vx_0$ requires predicting a high-dimensional noise component in addition to structured data~\cite{li2025jit,zheng2025rae}. An alternative is $\vx_0$-prediction, where the network predicts clean data $\hat{\vx}_0=G_\vtheta(\vx_t,t)$ and recovers velocity as $\hat{\vu} = (\vx_t-\hat{\vx}_0)/\sigma_t$. This avoids directly regressing Gaussian noise~\cite{li2025jit}, but the $1/\sigma_t$ conversion is ill-conditioned at low noise levels~\cite{karras2022elucidating,salimans2022progressive}, limiting final-sample quality. \citet{shin2026pixelrepa} also claim that REPA-style alignment is less effective in $\vx_0$-prediction pixel models. Thus, $\vu$- and $\vx_0$-prediction expose complementary trade-offs where neither is ideal for high-dimensional generation.

\section{Asymmetric Flow Modeling}
\label{sec:method}

\begin{figure}
    \centering
    \includegraphics[width=1.0\linewidth]{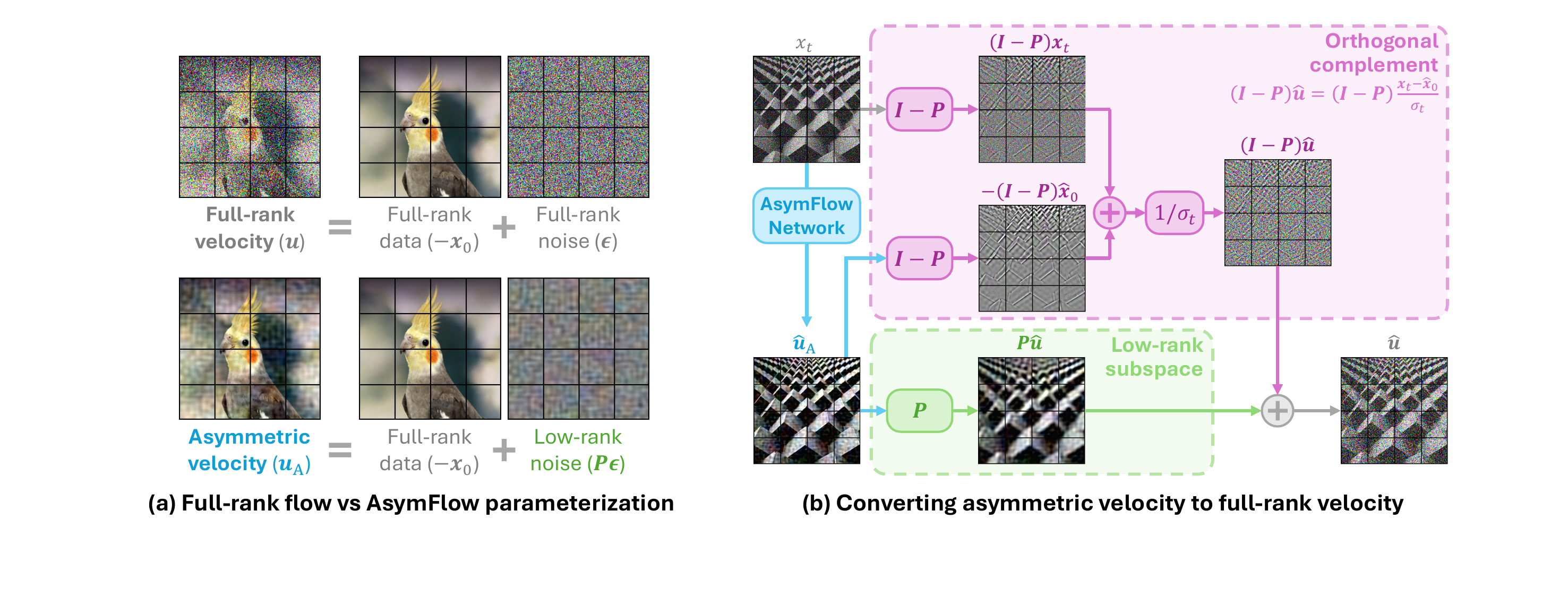}
    \caption{\textbf{AsymFlow parameterization and recovery.}
    (a) AsymFlow changes the standard velocity target by keeping the data term full-dimensional while replacing the noise term with its low-rank projection $\mP\bm{\epsilon}$.
    (b) To recover the full-rank velocity, the low-rank component $\mP\hat{\vu}_\mathrm{A}$ is used directly, while the orthogonal component is converted using the $\vx_0$-to-$\vu$ relation in Eq.~(\ref{eq:velocity_def}).}
    \label{fig:asym_param}
\end{figure}

To address the challenges of high-dimensional flow modeling, we introduce AsymFlow, a rank-asymmetric parameterization of the flow target. The key idea is to treat the two terms in the velocity target asymmetrically: the data prediction term remains full-dimensional, while the noise prediction is restricted to a low-rank subspace. This reduces the burden of representing high-dimensional noise in the network's internal states without changing the network architecture. The full-rank velocity is then recovered analytically for training and sampling, leaving the flow matching formulation unchanged.

\subsection{AsymFlow Parameterization}
\label{sec:asymflow_param}

Let $\mA\in\R^{D\times r}$ be an orthonormal basis of a rank-$r$ subspace, with $\mA\T\mA=\mI_r$, and let $\mP\coloneqq\mA\mA\T$ be the corresponding orthogonal projector. Then $\mathrm{Im}(\mP)$ is the low-rank subspace and
$\mathrm{Im}(\mI-\mP)$ is its orthogonal complement. Given the noise $\bm{\epsilon}\in\R^D$, we use $\mP\bm{\epsilon}$ to denote its subspace component. We refer to $\mP\bm{\epsilon}$ as \emph{low-rank noise}, meaning Gaussian noise projected to a low-rank subspace.

AsymFlow changes the target that the network is asked to predict. In standard $\vu$-prediction (Eq.~(\ref{eq:velocity_def})), the output must reproduce the full noise component $\bm{\epsilon}$ together with the data term $-\vx_0$. For high-dimensional data, this forces the model to carry high-dimensional noise through its features, which pollutes its internal states and wastes network capacity. To address this issue, AsymFlow introduces an \emph{asymmetric velocity} $\vu_\mathrm{A}$ where the noise term is low-rank while the data term remains full-rank:
\begin{equation}
    \vu_\mathrm{A} \coloneqq \mP\bm{\epsilon}-\vx_0 .
    \label{eq:asymflow_target}
\end{equation}
We then train the network to predict the asymmetric velocity, i.e., $\hat{\vu}_\mathrm{A}=G_\vtheta(\vx_t,t)$. 
This prediction will be converted back to the full-rank velocity
$\hat{\vu}$ for loss calculation and denoising sampling (Sec.~\ref{sec:recover_param}).

Fig.~\ref{fig:asym_param}~(a) illustrates the visual difference between the full-rank velocity $\vu$ and the asymmetric velocity $\vu_\mathrm{A}$. Full-rank velocity is perturbed by dense noise, making it highly unpredictable. In contrast, AsymFlow keeps the structured data term full-dimensional but restricts only the stochastic noise term to a low-rank subspace. Since image data itself concentrates near a low-dimensional manifold, this makes the overall asymmetric target more predictable for neural networks.

\textbf{Patch-wise low-rank projection.}
Following the patch-token representation of DiTs~\cite{dit}, we apply low-rank projection independently within each image patch. Concretely, for a patch dimension $D$ and rank $r<D$, the matrix $\mA\in\R^{D\times r}$ defines a low-rank subspace for each patch token, and the same projector $\mP=\mA\mA\T$ is shared across all tokens. Thus, AsymFlow reduces the noise prediction dimension within each patch while preserving the full set of image tokens.

\textbf{Choosing the low-rank subspace.}
When training AsymFlow from scratch, $\mA$ can be obtained from a data-dependent patch basis, e.g., by applying PCA to image patches. When adapting a pretrained latent model, $\mA$ is instead chosen to align the latent space with the pixel patch space, which we compute by a Procrustes alignment between latent variables and their corresponding pixel patches. This latter construction enables a seamless latent-to-pixel initialization, and is discussed in Sec.~\ref{sec:finetune}.

\subsection{Orthogonal Component View and Full-Rank Velocity Recovery}
\label{sec:recover_param}

\begin{figure}
    \centering
    \includegraphics[width=1.0\linewidth]{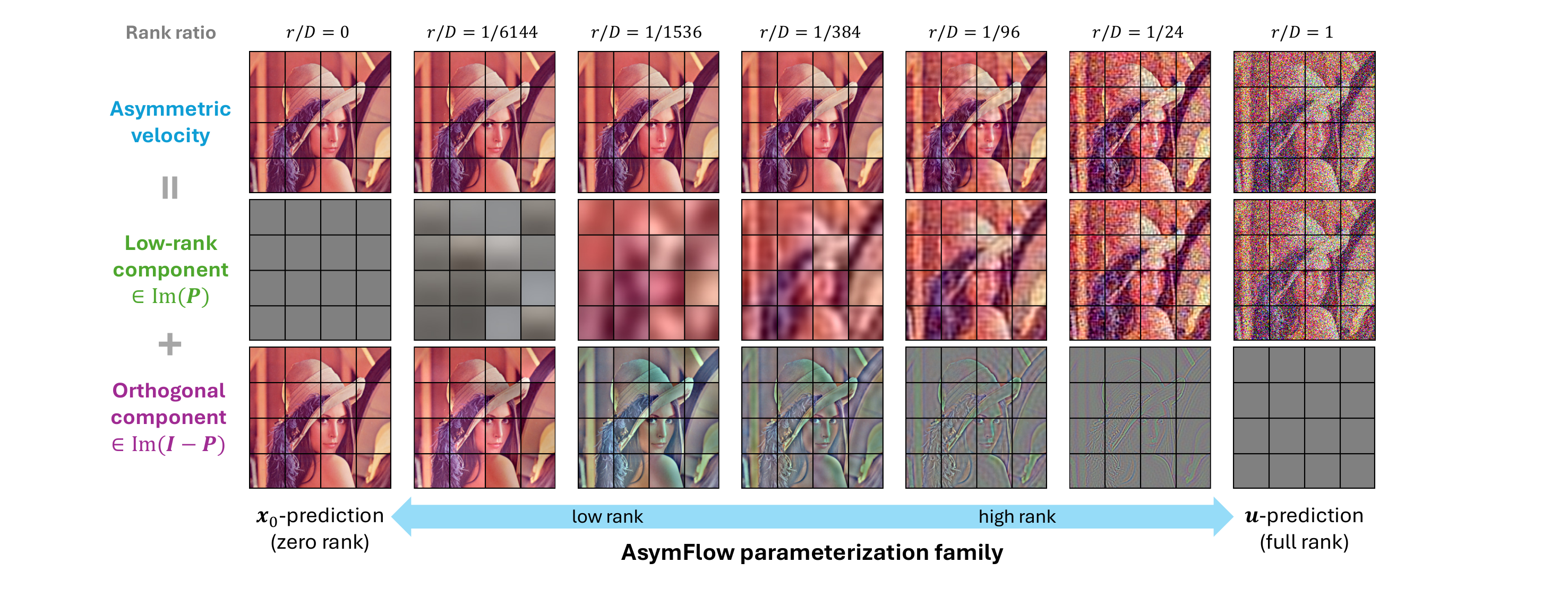}
    \caption{\textbf{Orthogonal component view of AsymFlow}. AsymFlow parameterization can be decomposed into a $\mP\vu$ component in the low-rank subspace $\mathrm{Im}(\mP)$ and an $(\mI-\mP)\vx_0$ component in the orthogonal complement $\mathrm{Im}(\mI-\mP)$. Varying the rank $r$ yields a parameterization family whose endpoints recover full $\vx_0$-prediction and full $\vu$-prediction. 
    }
    \label{fig:param}
\end{figure}

The asymmetric velocity in Eq.~(\ref{eq:asymflow_target}) has a simple interpretation after decomposing it into the low-rank subspace $\mathrm{Im}(\mP)$ and its orthogonal complement $\mathrm{Im}(\mI-\mP)$:
\begin{equation}
    \mP\vu_\mathrm{A}
    = \mP\bm{\epsilon}-\mP\vx_0 = \mP\vu,
    \qquad
    (\mI-\mP)\vu_\mathrm{A}
    =
    -(\mI-\mP)\vx_0 .
    \label{eq:asymflow_component_view}
\end{equation}
The decomposition reveals that AsymFlow behaves like $\vu$-prediction in the low-rank subspace and like $\vx_0$-prediction in the orthogonal complement. Adjusting the rank $r$ creates a family of parameterizations between the two endpoints, as shown in Fig.~\ref{fig:param}: when $r=0$, the target reduces to full $\vx_0$-prediction up to sign; when $r=D$, AsymFlow recovers full $\vu$-prediction. We expect a small but nonzero rank $r$ to be optimal: it retains the benefit of $\vu$-prediction for controlling the flow on a low-dimensional subspace, while avoiding the burden of predicting full-rank noise.

This component view also provides the conversion back to the full-rank velocity. We keep the low-rank velocity component $\mP\vu_\mathrm{A}$, and convert the orthogonal $\vx_0$-style component to velocity using the $\vx_0$-to-$\vu$ relation established in Eq.~(\ref{eq:velocity_def}):
\begin{equation}
    \vu
    =
    \mP\vu_\mathrm{A}
    +
    (\mI-\mP)\frac{\vx_t+\vu_\mathrm{A}}{\sigma_t}.
    \label{eq:asymflow_inference}
\end{equation}

In practice, we apply the conversion to the network prediction $\hat{\vu}_\mathrm{A}$ to obtain $\hat{\vu}$, which is used in the flow matching loss (Eq.~(\ref{eq:fm_loss})) and denoising sampling. Fig.~\ref{fig:asym_param}~(b) illustrates this conversion visually.

\section{Finetuning Latent Flow into Pixel AsymFlow}
\label{sec:finetune}

A key advantage of AsymFlow is that it provides a direct way to turn pretrained $\vu$-predicting latent flow models into pixel-space generators. We first lift a pretrained latent model into an equivalent low-rank pixel flow at initialization, with exact input and output conversions between latents and low-rank pixels. Solving this lifted pixel flow ODE preserves the latent trajectory up to an analytically determined orthogonal noise component, so the initialized model generates lifted low-rank pixels whose semantics and structure match the pretrained latent model. Finetuning then focuses on correcting the low-level projection gap between these low-rank pixels and the full-rank pixel targets.

\subsection{Latent-to-Pixel Initialization}
\label{sec:finetune_init}

We consider a latent flow model $\hat{\vu}_\vz = G_{\bm{\phi}}(\vz_t,t)$ pretrained on latent tokens $\vz_0\in\R^d$ with velocity $\vu_\vz \coloneqq \bm{\epsilon}_\vz-\vz_0$. To bridge the latent-to-pixel gap, we construct a patch-wise linear lift $\mA\in\R^{D\times d}$ from latent space to pixel space using Procrustes alignment (details in Appendix~\ref{app:subspace}), such that the lifted low-rank pixels $\vx_0^\mathrm{L} \coloneqq \mA\vz_0$ approximate the full-rank pixels $\vx_0$. Consider the corresponding pixel-space forward process $\vx_t^\mathrm{L} \coloneqq \alpha_t\vx_0^\mathrm{L}+\sigma_t\bm{\epsilon}$ and velocity $\vu^\mathrm{L}\coloneqq\bm{\epsilon}-\vx_0^\mathrm{L}$. Then the latent and pixel quantities are related by exact input and output conversions:
\begin{equation}
    \text{input:}\quad
    \vz_t=\mA\T\vx_t^\mathrm{L},
    \qquad
    \text{output:}\quad
    \vu^\mathrm{L}
    =
    \mP\mA\vu_\vz
    +
    (\mI-\mP)\frac{\vx_t^\mathrm{L}+\mA\vu_\vz}{\sigma_t}.
    \label{eq:latent_pixel_conversion}
\end{equation}
The input identity shows that noisy low-rank pixels can be projected to noisy latents by $\mA\T$, while the output identity converts the lifted latent velocity $\mA\vu_\vz$ back to the low-rank pixel velocity using the same recovery rule as AsymFlow in Eq.~(\ref{eq:asymflow_inference}). These identities imply trajectory coupling of the lifted pixel and latent ODEs (Theorem~\ref{thm:trajectory_consistency}). Therefore, a $d$-dimensional latent $\vu$-prediction model can be reinterpreted as an exact rank-$d$ pixel flow model with the network $\mA G_{\bm{\phi}}(\mA\T\vx_t^\mathrm{L},t)$. In implementation, the projections $\mA\T$ and $\mA$ are fused into the learnable input and output linear layers of $G_{\bm{\phi}}$, yielding the initialized pixel AsymFlow model $\hat{\vu}_\mathrm{A} = G_{\bm{\theta}}(\vx_t,t)$ for later finetuning.

\begin{wrapfigure}{r}{0.5\textwidth}
    \vspace{-0.3cm}
    \centering
    \includegraphics[width=1.0\linewidth]{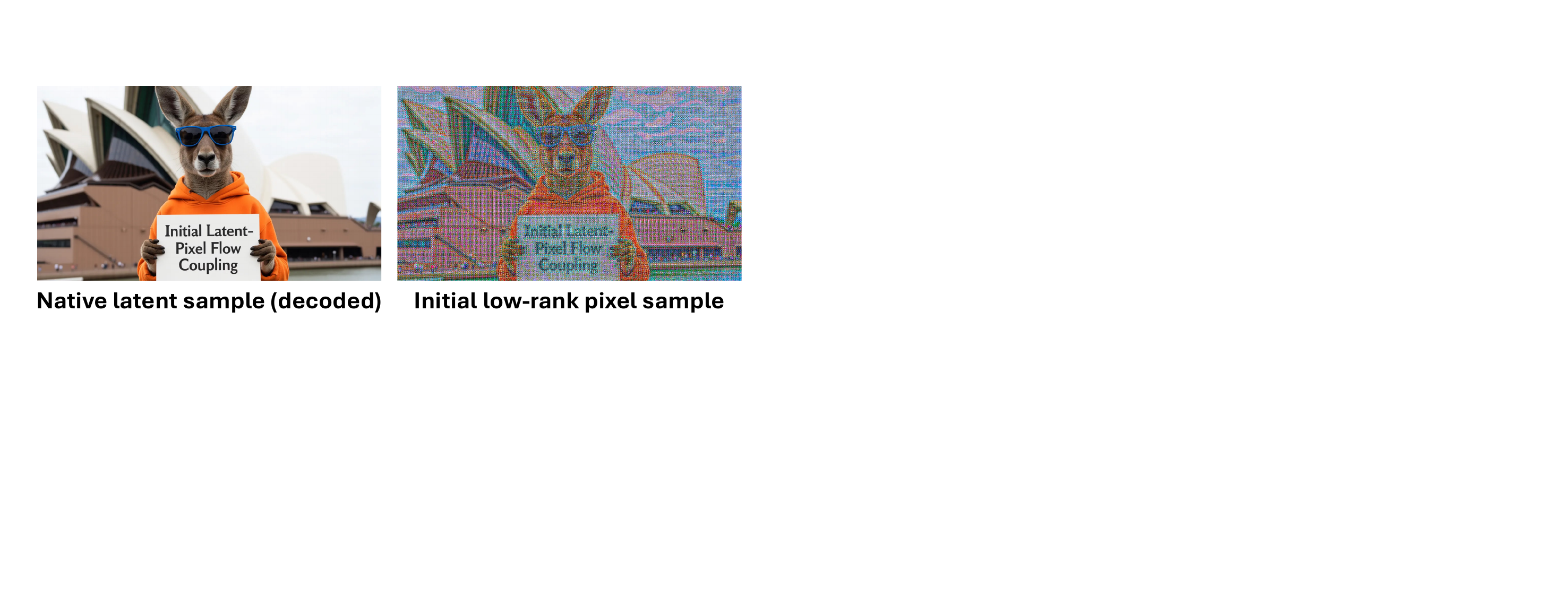}
    \caption{\textbf{Latent-to-pixel initialization.} The lifted low-rank pixel generation are semantically and structurally aligned with the decoded latent generation, leaving only a low-level gap to correct.}
    \label{fig:lowrank_init}
    \vspace{-0.3cm}
\end{wrapfigure}

\textbf{Initialization property.}
The initialized low-rank pixel model predicts a target of the form $\mP\bm{\epsilon}-\vx_0^\mathrm{L}$, so its gap to the AsymFlow target $\vu_\mathrm{A}$ (Eq.~(\ref{eq:asymflow_target})) is only the approximation gap $\vx_0 - \vx_0^\mathrm{L}$. Due to the trajectory coupling (Theorem~\ref{thm:trajectory_consistency}), sampling the initialized model generates $\vx_0^\mathrm{L}$-like lifted low-rank pixel samples without accumulating additional trajectory errors. These samples are semantically and structurally aligned with the $\vx_0$-like decoded latent samples, so the gap $\vx_0-\vx_0^\mathrm{L}$ is mainly low-level and easy to correct during finetuning, as shown in Fig.~\ref{fig:lowrank_init}.

\textbf{Scale calibration.}
A good initialization requires the scale of the lifted pixels $\vx_0^\mathrm{L}$ to align with the scale of real pixels $\vx_0$. However, under the orthonormality constraint $\mA\T\mA=\mI$, Procrustes alignment matches directions but not scale. We therefore introduce a scale factor $s$ and use the scale-calibrated lift $\vx_0^\mathrm{L}=s\mA\vz_0$. In implementation, this scale correction is folded into the model input, output, and internal timestep calibration, as detailed in Appendix~\ref{app:calibration}.

\subsection{Variance-Reduced Finetuning Loss}
\label{sec:anchored_fm}

The initialization above reduces latent-to-pixel finetuning to correcting the paired low-level gap $\vx_0-\vx_0^\mathrm{L}$. While the standard flow matching loss (Eq.~(\ref{eq:fm_loss})) regressing to $\vx_0$ already provides a valid objective, the paired low-rank target $\vx_0^\mathrm{L}$ offers additional structure that can be used for variance reduction using control variates, thereby improving convergence and sample quality~\citep{xu2023stable}. 

To achieve this, we inject a term $-\lambda(\vx_0^\mathrm{L}-\E[\vx_0^\mathrm{L}|\vx_t])$ into Eq.~(\ref{eq:fm_loss}). This gives an equivalent flow matching loss whose variance is lower when $\|\vx_0-\vx_0^\mathrm{L}\|$ is small. The conditional mean $\E[\vx_0^\mathrm{L}|\vx_t]$ can then be approximated by the prediction $\hat{\vx}_0^\mathrm{L}$ of a frozen copy of the initialized low-rank model:
\begin{align}
    \E_{t,\vx_0,\bm{\epsilon}}\left[\frac{\left\|\vx_0 - \hat{\vx}_0 - \lambda(\vx_0^\mathrm{L}-\E[\vx_0^\mathrm{L}|\vx_t])\right\|^2}{\sigma_t^2}\right] \approx \E_{t,\vx_0,\bm{\epsilon}}\left[\frac{\left\| \vx_0 - \hat{\vx}_0 - \lambda(\vx_0^\mathrm{L}-\hat{\vx}_0^\mathrm{L}) \right\|^2}{\sigma_t^2}\right] \eqqcolon \mathcal{L}_{\mathrm{VR}}.
    \raisetag{2ex}
    \label{eq:anchored_loss}
\end{align}
Here, $\hat{\vx}_0$ is predicted by the finetuned AsymFlow model from $\vx_t$ (converted to the $\vx_0$ format), and $\hat{\vx}_0^\mathrm{L}$ is predicted by the frozen low-rank model from the paired noisy low-rank sample $\vx_t^\mathrm{L} = \alpha_t \vx_0^\mathrm{L} + \sigma_t \bm{\epsilon}$, diffused with the same noise as $\vx_t$. The parameter $\lambda$ is a patch-wise adaptive weight chosen to minimize the loss gradient norm, thereby reducing the variance of the effective target. In practice, this is implemented via an orthogonal projection and detailed in Appendix~\ref{app:lambda}. Empirically, the resulting variance-reduced objective $\mathcal{L}_{\mathrm{VR}}$ substantially improves fine-grained details in the generated results.

\textbf{Perceptual correction.}
The approximation in Eq.~(\ref{eq:anchored_loss}) assumes $\E[\vx_0^\mathrm{L}|\vx_t] \approx \E[\vx_0^\mathrm{L}|\vx_t^\mathrm{L}]$, which is only exact if $\vx_t-\vx_t^\mathrm{L}\in\mathrm{Im}(\mI-\mP)$. In practice, this condition is rarely strictly satisfied when $t<1$, meaning the variance reduction term $\lambda(\vx_0^\mathrm{L} - \hat{\vx}_0^\mathrm{L})$ introduces a bounded approximation error inside the low-rank subspace $\mathrm{Im}(\mP)$. Empirically, this manifests as excessive noise in the generated results. To compensate, we add an LPIPS perceptual loss~\cite{lpips,ma2026pixelgen} between $\vx_0$ and $\hat{\vx}_0$. This perceptual loss is gated by the same patch-wise weight $\lambda$, and we dynamically fade from the variance reduction term to the LPIPS loss across diffusion time. We defer the exact weighting schedule to Appendix~\ref{app:perceptual}.

\section{Experiments}
\label{sec:exp}

We evaluate AsymFlow in two settings: ImageNet pixel models trained from scratch with the JiT-H/16 network, which isolate the parameterization itself, and large text-to-image models finetuned from the FLUX.2 klein latent generator, which test the finetuning approach and scalability of AsymFlow.

\begin{table}[t]
    \begin{minipage}[t]{0.3\linewidth}
        \vspace{0pt}
        \centering
        \includegraphics[width=0.93\linewidth]{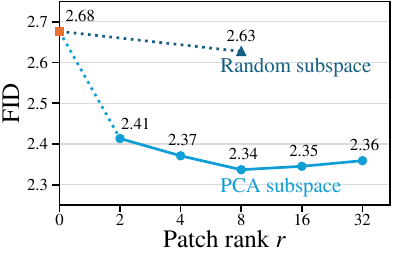}
        \vspace{-0.7ex}
        \captionof{figure}{\textbf{Patch rank and PCA ablation}. 160 epochs.}
        \label{fig:rank_fid}
    \end{minipage}
    \hfill
    \begin{minipage}[t]{0.3\linewidth}
        \vspace{0pt}
        \centering
        \includegraphics[width=0.93\linewidth]{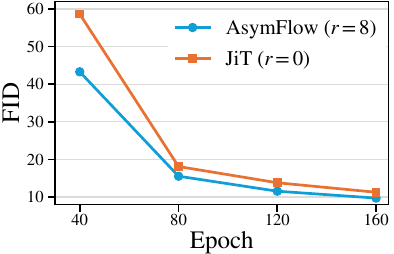}
        \vspace{-0.7ex}
        \captionof{figure}{\textbf{Convergence speed comparison}. Unguided FIDs.}
        \label{fig:converge}
    \end{minipage}
    \hfill
    \begin{minipage}[t]{0.34\linewidth}
        \vspace{-1.5ex}
        \centering
        \caption{\textbf{AsymFlow vs. JiT-H/16} and sensitivity to $\sigma_\mathrm{min}$ clamping. 600 epochs (final checkpoint).}
        \label{tab:sigma_clamp}
        \scalebox{0.8}{%
            \setlength{\tabcolsep}{0.6em}
            \begin{tabular}{lcccccc}
            \toprule
            \textbf{Method} & \textbf{$\sigma_\mathrm{min}$} & \textbf{FID} & \textbf{IS} \\
            \midrule
            \multirow{2}[0]{*}{\makecell[l]{AsymFlow \\ ($r=8$)}} 
            & 0.04 & 1.76 & 312.0 \\
            & \cellcolor{gray!20}0.00 & \cellcolor{gray!20}2.28 & \cellcolor{gray!20}306.2 \\
            \midrule
            \multirow{2}[0]{*}{JiT ($r=0$)} 
            & 0.04 & 1.90 & 300.8 \\
            & \cellcolor{gray!20}0.00 & \cellcolor{gray!20}3.27 & \cellcolor{gray!20}286.7 \\
            \bottomrule
            \end{tabular}
        }
    \end{minipage}
\end{table}

\begin{wraptable}{R}{0.46\linewidth}
    \vspace{-10ex}
    \centering
    \caption{\textbf{ImageNet 256\texttimes 256 pixel diffusion comparison}. FLOP estimation follows the convention in \cite{yu2025pixeldit}. * denotes JiT evaluation protocol, which may have up to 0.08 better FID than ADM according to our tests.}
    \label{tab:imagenet_compare}
    \scalebox{0.75}{%
        \setlength{\tabcolsep}{0.2em}
        \begin{tabular}{lcccl}
        \toprule
        \textbf{Method} & \textbf{Pred (\textpm)} & \textbf{Params} & \textbf{GFLOPs} & \textbf{FID\textdownarrow} \\
        \midrule
        \multicolumn{5}{l}{\textit{Hierarchical CNNs (skip connections / U-Net-like)}} \\
        \midrule
        ADM-G~\cite{dhariwal2021adm} & $\bm{\epsilon}$ & 554M & 2240 & 4.59 \\
        \midrule
        \multicolumn{5}{l}{\textit{Hierarchical transformers (skip connections / U-ViT-like)}} \\
        \midrule
        RIN~\cite{rin} & $\bm{\epsilon}$ & 320M & 668 & 3.42 \\
        SiD, UViT/2~\cite{hoogeboom2023simple} & $\bm{\epsilon}$ & 2B & 1110 & 2.44 \\
        VDM++, UViT/2~\cite{vdm} & $\bm{\epsilon}$ & 2B & 1110 & 2.12 \\
        SiD2, UViT/2~\cite{sid2} & $\bm{\epsilon}$ & - & 274 & 1.73 \\
        EPG-G/16~\cite{epg} & $\vx_0$ & 1.4B & 642 & \underline{1.58} \\
        SiD2, UViT/1~\cite{sid2} & $\bm{\epsilon}$ & - & 1306 & \textbf{1.38} \\
        \midrule
        \multicolumn{5}{l}{\textit{Hierarchical transformers (decoder head / DDT-like)}} \\
        \midrule
        PixNerd-XL/16~\cite{wang2025pixnerd} & $\bm{\epsilon}-\vx_0$ & 700M & 268 & 2.15 \\
        DiP-XL/16~\cite{chen2025dip} & $\bm{\epsilon}-\vx_0$ & 631M & - & 1.79 \\
        DeCo-XL/16~\cite{ma2025deco} & $\bm{\epsilon}-\vx_0$ & 682M & 245 & \underline{1.62} \\
        PixelDiT-XL/16~\cite{yu2025pixeldit} & $\bm{\epsilon}-\vx_0$ & 797M & 311 & \textbf{1.61} \\
        \midrule
        \multicolumn{5}{l}{\textit{Plain transformers (DiT-like)}} \\     
        \midrule
        PixelFlow-XL/4~\cite{chen2025pixelflow} & $\bm{\epsilon}-\vx_0$ & 677M & 5818 & 1.98  \\
        JiT-H/16~\cite{li2025jit} & $\vx_0$ & 953M & 363 & 1.86* \\
        PixelGen-XL/16~\cite{ma2026pixelgen} & $\vx_0$ & 676M & 260 & 1.83 \\
        JiT-G/16~\cite{li2025jit} & $\vx_0$ & 2B & 766 & 1.82* \\
        PixelREPA-H/16~\cite{shin2026pixelrepa} & $\vx_0$ & 953M & 363 & \underline{1.81}* \\
        \textbf{AsymFlow-H/16} & $\mP \bm{\epsilon}-\vx_0$ & 953M & 363 & \textbf{1.57} \\
        \bottomrule
        \end{tabular}
    }
\end{wraptable}

\subsection{Training from Scratch on ImageNet}
We train class-conditional ImageNet 256\texttimes 256 pixel models using the same setup as JiT-H/16 (see Table 9 in \cite{li2025jit}), changing only the prediction parameterization. Unless otherwise stated, AsymFlow is trained using the flow matching loss (Eq.~(\ref{eq:fm_loss})) using a $D=768$ patch-wise PCA subspace of rank $r$, with $r=0$ exactly reproducing JiT's $\vx_0$-prediction. Results use ADM evaluation~\cite{dhariwal2021adm,FID} with grid-searched guidance scales and intervals that optimize FID~\cite{cfg,intervalcfg}. We defer the details to Appendix~\ref{app:experiments}.

\textbf{Comparison with JiT baseline.}
Table~\ref{tab:sigma_clamp} compares AsymFlow ($r=8$) and the official JiT checkpoint using ADM evaluation after 600 epochs. In practical sampling, the $\vx_0$-to-$\vu$ conversion in Eq.~(\ref{eq:velocity_def}) clamps the denominator by $\sigma_\mathrm{min}$ to avoid numerical instability~\cite{li2025jit}. Since AsymFlow applies this conversion only in the orthogonal complement, it should be less sensitive to this clamp. The results confirm this: with the optimal $\sigma_\mathrm{min}=0.04$ for both methods, AsymFlow improves over JiT in both FID and IS by a clear margin; disabling clamping degrades JiT by 1.37 FID, but AsymFlow by only 0.52. This shows that the asymmetric parameterization improves both overall quality and low-noise numerical stability.

\textbf{Patch rank.}
Figure~\ref{fig:rank_fid} studies the effect of the patch rank. Moving from JiT ($r=0$) to AsymFlow sharply improves guided FID, with the best result at $r=8$; increasing the rank further gives mild degradation. This matches the intended trade-off: AsymFlow keeps velocity prediction in a useful low-rank subspace while avoiding the burden of predicting high-dimensional noise.

\textbf{PCA subspace.}
Figure~\ref{fig:rank_fid} also compares PCA and random subspaces at $r=8$. The random subspace performs close to the JiT baseline and far worse than PCA, showing that the gain comes from using a meaningful low-rank subspace, not merely reducing rank.

\textbf{Convergence speed.}
Figure~\ref{fig:converge} compares FID during training. With the same architecture and recipe, AsymFlow ($r=8$) consistently improves over JiT and reaches comparable FID roughly 40\% faster. Thus, the rank-asymmetric target improves not only final quality but also optimization efficiency.

\textbf{Comparison with prior pixel diffusion models.}
Table~\ref{tab:imagenet_compare} compares AsymFlow ($r=8$ plus a standard REPA loss~\cite{yu2024repa}) with prior ImageNet 256\texttimes 256 pixel diffusion models. With REPA, AsymFlow reaches 1.57 FID, establishing the state of the art among practical pixel diffusion models (excluding the much more expensive SiD2 UViT/1). In particular, AsymFlow outperforms previous plain-transformer models by a large margin (FID 1.57 vs. 1.81*). This result also shows that AsymFlow is strongly compatible with REPA: PixelREPA~\cite{shin2026pixelrepa} reports that plain REPA is ineffective for larger JiT models, and its additional designs improve JiT-H/16 only from 1.86* to 1.81* FID; in contrast, adding plain REPA to AsymFlow improves FID from 1.76 to 1.57, suggesting that the AsymFlow parameterization is much more robust to auxiliary losses and can better leverage their benefits.

\subsection{Finetuning Large Text-to-Image Models}

\begin{figure}[t]
    \centering
    \includegraphics[width=1.0\linewidth]{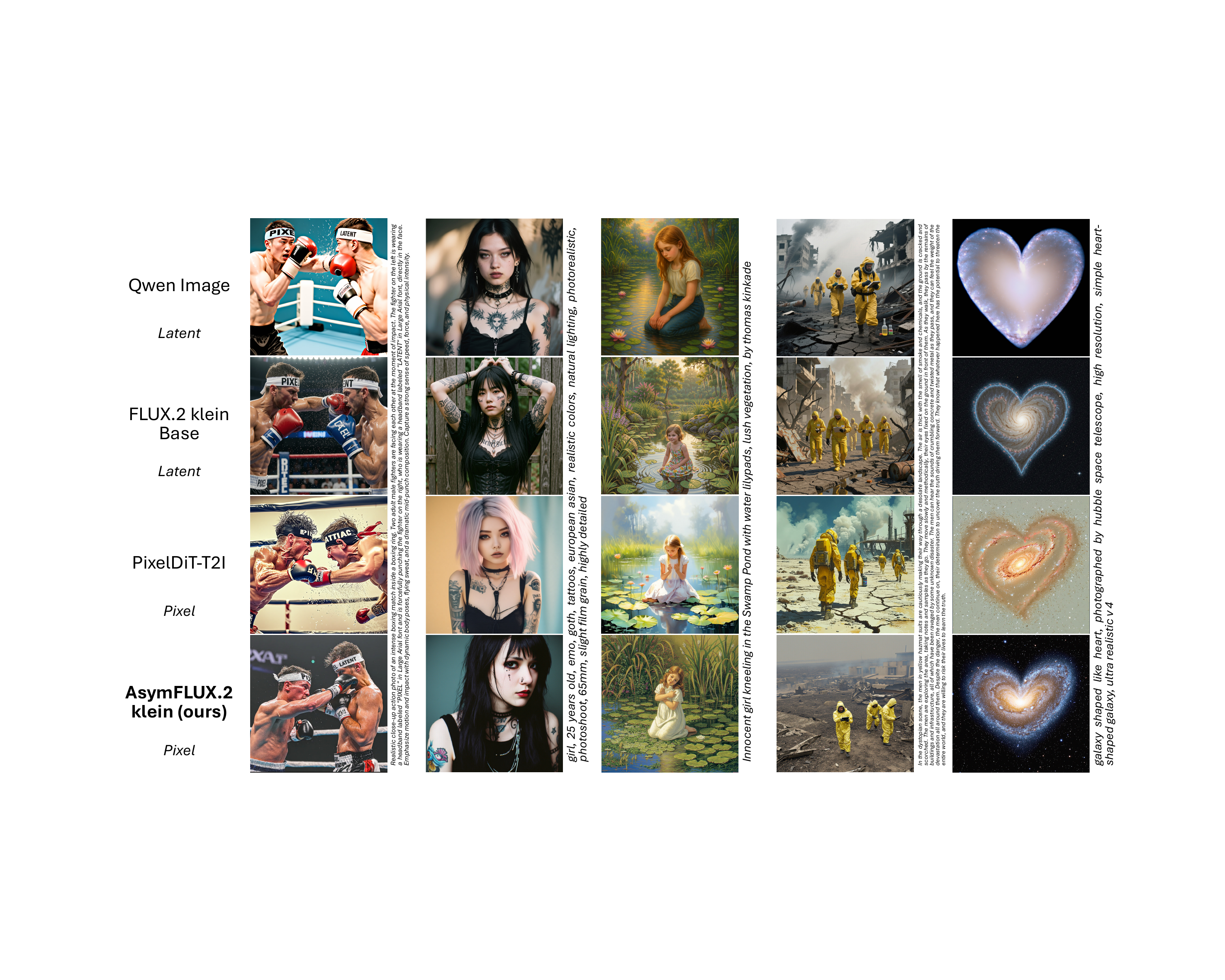}
    \caption{\textbf{Qualitative comparison of T2I diffusion
     models.} AsymFLUX.2 klein produces more realistic images with richer visual styles than prior models. More results are shown in Fig.~\ref{fig:supp_1} and \ref{fig:supp_2}.}
    \label{fig:t2i_system}
\end{figure}

\begin{table}
    \vspace{-3ex}
    \centering
    \caption{\textbf{Comparison with baselines and ablation studies}. All models are finetuned on the LAION-Aesthetics dataset~\cite{laion} for 10K iterations, and evaluated on the COCO-10K dataset~\cite{coco}.}
    \label{tab:flux_ablation}
    \scalebox{0.8}{%
        \setlength{\tabcolsep}{1.2em}
        \begin{tabular}{lcccccc}
        \toprule
        \textbf{Method}
        & \ignorewidth{\textbf{HPSv3\textuparrow}}
        & \ignorewidth{\thickspace\textbf{HPSv2.1\textuparrow}}
        & \ignorewidth{\textbf{VQA\textuparrow}}
        & \ignorewidth{\textbf{CLIP\textuparrow}}
        & \ignorewidth{\textbf{FID\textdownarrow}}
        & \ignorewidth{\textbf{pFID\textdownarrow}} \\
        \midrule
        FLUX.2 klein Base + latent finetune & 10.70 & 0.290 & \textbf{0.936} & 0.276 & \textbf{15.0} & \textbf{18.8} \\
        FLUX.2 klein Base + DDT finetune & 10.33 & 0.291 & 0.922 & 0.273 & 20.4 & 26.0 \\
        \midrule
        AsymFLUX.2 klein (standard FM) & 12.03 & 0.293 & 0.922 & 0.277 & 20.2 & 25.4 \\
        AsymFLUX.2 klein (variance reduction) & \underline{12.99} & \underline{0.296} & \underline{0.925} & \textbf{0.280} & \underline{18.5} & 27.8 \\
        \quad + perceptual correction & \textbf{13.06} & \textbf{0.297} & \underline{0.925} & \underline{0.278} & 19.1 & \underline{22.5} \\
        \bottomrule
        \end{tabular}
    }
\end{table}

\begin{wraptable}{R}{0.43\linewidth}
    \vspace{-10ex}
    \centering
    \caption{\textbf{System-level comparison of text-to-image (1024\texttimes 1024) diffusion models}.}
    \label{tab:t2i_compare}
    \scalebox{0.8}{%
        \setlength{\tabcolsep}{0.2em}
        \begin{tabular}{lccc}
        \toprule
        \textbf{Method} & \textbf{HPSv3\textuparrow} & \textbf{DPG\textuparrow} & \textbf{GenEval\textuparrow} \\
        \midrule
        \multicolumn{4}{l}{\textit{Latent diffusion models}} \\
        \midrule
        SDXL~\cite{sdxl} & \phantom{0}8.20 & 74.7 & 0.55 \\
        PixArt-$\Sigma$~\cite{pixartsigma} & \phantom{0}\underline{9.37} & 80.5 & 0.54 \\
        Hunyuan-DiT~\cite{hunyuandit} & \phantom{0}8.19 & 78.9 & 0.63 \\
        FLUX.1 dev~\cite{flux2024} & \textbf{10.43} & 84.0 & 0.67 \\
        Qwen-Image~\cite{qwen} & \phantom{0}9.52 & \textbf{87.8} & \textbf{0.86} \\
        FLUX.2 klein Base~\cite{flux2} & \phantom{0}9.50 & \underline{85.2} & \underline{0.80} \\
        \midrule
        \multicolumn{4}{l}{\textit{Pixel diffusion models}} \\
        \midrule
        PixelDiT-T2I~\cite{yu2025pixeldit} & \phantom{0}\underline{8.95} & \underline{83.5} & \underline{0.74} \\
        \textbf{AsymFLUX.2 klein} & \textbf{10.66} & \textbf{86.8} & \textbf{0.82} \\
        \bottomrule
        \end{tabular}
    }
\end{wraptable}

For text-to-image generation, we finetune the pretrained FLUX.2 klein Base 9B latent flow model~\cite{flux2} (patch dimension $d=128$) into a pixel-space AsymFlow model. We call the resulting model AsymFLUX.2 klein. The model is finetuned on 3M LAION-Aesthetics images~\cite{laion}, resized to one-megapixel resolution and captioned with Qwen2.5-VL~\cite{qwen25vl}. To reduce overfitting, we freeze the base model and finetune only the input/output projection layers together with rank-256 LoRA adapters~\cite{hu2022lora}. Sampling uses UniPC~\cite{unipc} with APG orthogonal-projection guidance~\cite{apg}. We defer additional details to Appendix~\ref{app:experiments}.

\textbf{Evaluation protocol.}
All text-to-image evaluations generate 1024\texttimes 1024 images. For system-level comparison, we use three benchmarks: HPSv3~\cite{hpsv3} measures human preference, which combines realism, style, and overall prompt following, while DPG-Bench~\cite{ella} and GenEval~\cite{geneval} focus more on fine-grained entities, attributes, relations, counting, and composition. For controlled ablations, we generate images using 10K captions from the COCO 2014 validation set~\cite{sdxllightning,coco} and report preference metrics HPSv3~\cite{hpsv3} and HPSv2.1~\cite{hpsv2}, prompt-alignment metrics VQAScore~\cite{vqascore} and CLIP score~\cite{clip}, and distribution metrics FID~\cite{FID} and patch FID (pFID)~\cite{sdxllightning}.

\textbf{System-level comparison.}
Table~\ref{tab:t2i_compare} compares AsymFLUX.2 klein (with variance reduction and perceptual correction) with prior latent and pixel text-to-image diffusion models. AsymFLUX.2 klein improves over its FLUX.2 klein latent base on all three benchmarks, with the largest gain on HPSv3, indicating a substantial improvement in human-aligned visual quality. Consequently, it outperforms the prior pixel model PixelDiT-T2I~\cite{yu2025pixeldit} by a large margin across all metrics, establishing a new state of the art for pixel-space text-to-image generation. Figure~\ref{fig:t2i_system} shows the same trend qualitatively: AsymFLUX.2 klein produces realistic and diverse visual styles with stronger texture, while popular latent models such as Qwen Image~\cite{qwen25vl} and FLUX.2 klein Base~\cite{flux2} still have a more artificial appearance; compared to PixelDiT-T2I, AsymFLUX.2 klein recovers much sharper details in addition to other qualitative improvements, marking a significant step forward for pixel-space text-to-image generation.

\textbf{Controlled baselines.}
To separate dataset effects from latent-to-pixel conversion, we include a latent-finetuned FLUX.2 klein baseline trained on the same data. We also include a $\vu$-prediction pixel finetuning baseline with a DDT decoder head~\cite{wang2025ddt,zheng2025rae}, similar in spirit to PixelDiT~\cite{yu2025pixeldit}. The results are presented in Table~\ref{tab:flux_ablation}: compared to the latent baseline, finetuned AsymFLUX.2 klein models yield clear improvements in HPSv3 and HPSv2.1, indicating that the improved overall quality comes from AsymFlow pixel-space conversion instead of dataset bias. In contrast, the DDT baseline falls behind in all metrics, despite having more parameters and capacity. This is also reflected in the qualitative comparison in Figure~\ref{fig:t2i_ablation}, where the DDT baseline produces blurry images and exhibits minor patch seams, while AsymFLUX.2 klein recovers sharper details and more realistic texture.

\textbf{Loss ablations.}
The results in Table~\ref{tab:flux_ablation} also validate the effectiveness of variance reduction and perceptual correction losses: variance reduction boosts all metrics except pFID, due to its low-noise approximation error that introduces excessive noise (Figure~\ref{fig:t2i_ablation}). This is directly addressed by the LPIPS perceptual correction loss, which significantly improves pFID and HPS scores, resulting in the most natural and realistic texture in Figure~\ref{fig:t2i_ablation}.

\begin{figure}[t]
    \centering
    \includegraphics[width=1.0\linewidth]{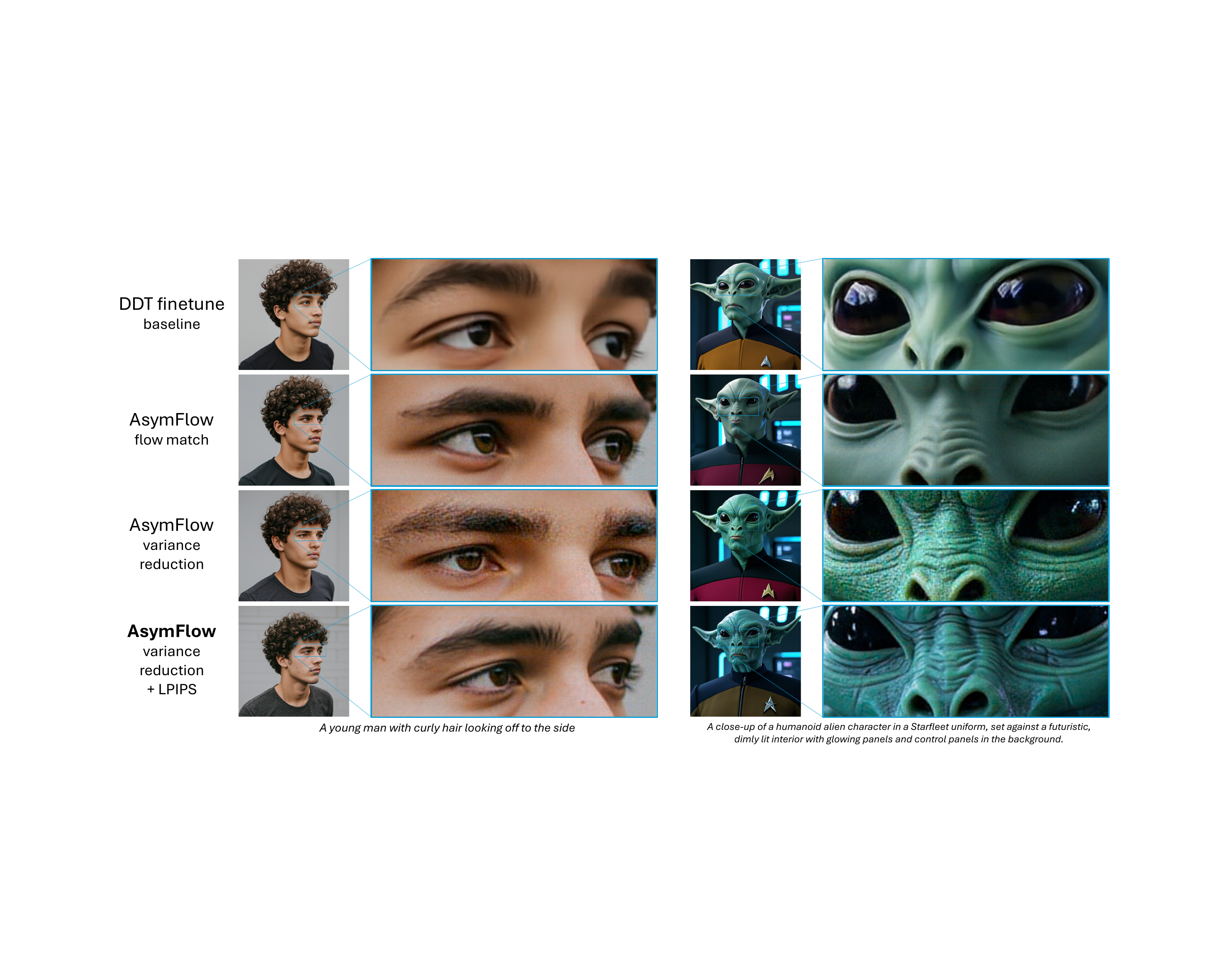}
    \caption{\textbf{Ablation of AsymFLUX.2 klein finetuning}. AsymFlow produces finer details than the DDT baseline. Variance reduction further improves details and texture but introduces excessive noise. The LPIPS perceptual correction suppresses this artifact while preserving the sharp appearance.}
    \label{fig:t2i_ablation}
\end{figure}

\section{Conclusion}
\label{sec:conclusion}

We introduced AsymFlow, a rank-asymmetric flow velocity parameterization that enables high-dimensional pixel-space generation with plain diffusion transformers. When trained from scratch, this single parameterization yields a leading 1.57 FID among ImageNet pixel diffusion models. It also provides the first path for finetuning pretrained large latent flow models into pixel generators with improved visual fidelity, demonstrating AsymFlow's scalability and practical impact. This opens promising directions for high-fidelity image and video generation with finer low-level control, as well as other high-dimensional data modalities previously out of reach for flow-based modeling.

\textbf{Limitations.}
Latent-to-pixel finetuning assumes a good patch-level linear lift. It may not work well when the pretrained latent space does not preserve pixel structure, such as in RAE models~\cite{zheng2025rae}.

\bibliographystyle{plainnat}
\bibliography{main}

\newpage
\appendix

\section{Method Details}
\label{app:details}

\subsection{Low-Rank Subspace Construction}
\label{app:subspace}

For transformer-based pixel generation, AsymFlow requires a patch-wise low-rank subspace. We use two constructions, depending on whether the model is trained from scratch or initialized from a latent model.

\textbf{Orthonormality requirement.}
In both cases we require the columns of $\mA$ to be orthonormal. This ensures that projecting standard pixel-space Gaussian noise preserves its Gaussian form inside the low-rank coordinates: if $\bm{\epsilon}\sim\mathcal{N}(\bm{0},\mI_D)$ and $\mA\T\mA=\mI_r$, then $\mA\T\bm{\epsilon}\sim\mathcal{N}(\bm{0},\mI_r)$.

\textbf{PCA basis for from-scratch training.}
Ideally, the low-rank directions would preserve the most perceptually important information in each image patch. When training from scratch, PCA gives a practical proxy by retaining the dominant patch variations without introducing an additional learned representation. Let $\mX\in\R^{D\times N}$ collect $N$ image patches with normalized pixel values. Taking the top left singular vectors of $\mX$ gives the PCA subspace:
\begin{equation}
    \mX=\mU\mSigma\mV\T,
    \qquad
    \mA=\mU_r,
    \qquad
    \mP=\mA\mA\T.
    \label{eq:app_pca_subspace}
\end{equation}
Here $\mU_r$ denotes the top $r$ columns of $\mU$. Thus $\mP$ keeps the data-adaptive PCA directions and removes the remaining patch-space directions from the noise prediction.

\textbf{Procrustes basis for latent-to-pixel finetuning.}
For latent-to-pixel finetuning, the subspace should be aligned with the pretrained latent representation to minimize the paired gap $\| \vx_0 - \vx_0^\mathrm{L} \|$. Let $\mX\in\R^{D\times N}$ collect image patches with normalized pixel values and $\mZ\in\R^{d\times N}$ collect the corresponding latent tokens. We solve the orthogonal Procrustes problem~\cite{Schönemann_1966}
\begin{equation}
    \mA^\star
    =
    \argmin_{\mA\in\R^{D\times d},\ \mA\T\mA=\mI_d}
    \|\mX-\mA\mZ\|_\mathrm{F}^2 .
    \label{eq:app_procrustes_objective}
\end{equation}
This objective finds an orthonormal lift from latent tokens to pixel patches. Equivalently, it maximizes the inner-product alignment between $\mA\mZ$ and $\mX$, so $\mA^\star = \argmax_{\mA\T\mA=\mI_d}\Tr(\mA\T\mX\mZ\T).$
If $\mX\mZ\T=\mU\mSigma\mV\T$ is the compact SVD, the solution is
\begin{equation}
    \mX\mZ\T=\mU\mSigma\mV\T,
    \qquad
    \mA^\star=\mU\mV\T,
    \qquad
    \mP=\mA^\star(\mA^\star)\T.
    \label{eq:app_procrustes_solution}
\end{equation}
Procrustes aligns directions under the orthonormality constraint. It does not determine the correct pixel scale, so we apply the scalar calibration below.

\subsection{Scale and Timestep Calibration}
\label{app:calibration}

The Procrustes lift gives a directionally aligned low-rank pixel reconstruction, but its magnitude may not match the pixel scale within the Procrustes subspace. We therefore introduce a scalar $s$ and use the calibrated lift
\begin{equation}
    \vx_0^\mathrm{L}=s\mA\vz_0,
    \qquad
    \mA\T\mA=\mI_d,
    \qquad
    \mP=\mA\mA\T.
    \label{eq:app_scaled_lift}
\end{equation}
The scalar $s$ is estimated from the same paired latent-token and pixel-patch statistics used above, by matching the Frobenius norm of the latents $\mZ$ and the rescaled projected pixels $\mA\T\mX / s$:
\begin{equation}
    s
    =
    \frac{\|\mA\T\mX\|_\mathrm{F}}{\|\mZ\|_\mathrm{F}} .
    \label{eq:app_scale_estimator}
\end{equation}
Equivalently, the calibrated lift $s\mA\mZ$ and the low-rank pixels $\mP\mX$ have the same Frobenius norm.

Scale calibration must also be reflected in noisy inputs, not only in the clean lift. Projecting a noisy pixel state gives signal coefficient $s\alpha_t$ and noise coefficient $\sigma_t$, so the latent-space signal-to-noise ratio (SNR) is $s\alpha_t/\sigma_t$. The SNR constraint first determines the latent time $\tau$ at which the pretrained model should be evaluated. Under the linear flow schedule, this gives
\begin{equation}
    \frac{1-\tau}{\tau}
    =
    \frac{s(1-t)}{t}
    \quad\Longrightarrow\quad
    \tau=\frac{t}{s(1-t)+t}.
    \label{eq:app_calibrated_time}
\end{equation}
After fixing $\tau$, the projected input must also have the correct noise magnitude $\sigma_\tau=\tau$. This determines the input rescaling
\begin{equation}
    k
    =
    \frac{\tau}{t}
    =
    \frac{1}{s(1-t)+t},
    \label{eq:app_input_scaling}
\end{equation}
which places the projected state on the latent trajectory expected by the pretrained model, up to a low-rank approximation error:
\begin{equation}
    \mA\T(k\vx_t)
    \approx
    \mA\T(k\vx_t^\mathrm{L})
    =
    \alpha_\tau\vz_0+\sigma_\tau\bm{\epsilon}_\vz
    =
    \vz_\tau.
    \label{eq:app_latent_trajectory_after_calibration}
\end{equation}

The output conversion must use the same calibration. The network is finetuned to predict the calibrated AsymFlow target
\begin{equation}
    \vu_\mathrm{A}^{\mathrm{cal}}
    \coloneqq
    \mP\bm{\epsilon}-\frac{\vx_0}{s},
    \label{eq:app_calibrated_asym_target}
\end{equation}
which is defined in the coordinate system of the rescaled input $k\vx_t$. Recovering the original pixel-space full-rank velocity $\vu=\bm{\epsilon}-\vx_0$ gives
\begin{equation}
    \vu
    =
    \underbrace{
    \mP\left(
    s k\,\vu_\mathrm{A}^{\mathrm{cal}}
    +
    (1-s k)\frac{\vx_t}{\sigma_t}
    \right)
    }_{\text{low-rank subspace}}
    +
    \underbrace{
    (\mI-\mP)\left(
    \frac{\vx_t+s\vu_\mathrm{A}^{\mathrm{cal}}}{\sigma_t}
    \right)
    }_{\text{orthogonal complement}}.
    \label{eq:app_calibrated_velocity_recovery}
\end{equation}
Eq.~(\ref{eq:app_calibrated_velocity_recovery}) is a generalized form of the uncalibrated conversion formula in Eq.~(\ref{eq:asymflow_inference}). When $s=1$ and $k=1$, it reduces to the uncalibrated formula. 

In practice, we apply this generalized conversion to the calibrated network prediction $\hat{\vu}_\mathrm{A}^{\mathrm{cal}} = G_{\bm{\theta}}(k\vx_t,k t)$ to obtain $\hat{\vu}$, which is used in the flow matching loss (Eq.~(\ref{eq:fm_loss})) and denoising sampling.

\subsection{Adaptive Weighting for Variance Reduction}
\label{app:lambda}

The variance-reduced loss in Eq.~(\ref{eq:anchored_loss}) uses a patch-wise coefficient $\lambda$. For a given patch prediction, $\lambda$ is determined by directly minimizing the loss residual along the one-dimensional control-variate direction (see Appendix~\ref{app:vr_deriv} for mathematical justification). Since the gradient of the squared loss is proportional to the corrected residual, this also minimizes the corresponding gradient norm, effectively selecting the lowest-variance target available along that direction.

The one-dimensional minimization has a closed-form solution given by an orthogonal projection. For each patch, define the low-rank prediction deviation of the frozen low-rank model as $\vd^{\mathrm{L}} \coloneqq \vx_0^\mathrm{L}-\hat{\vx}_0^\mathrm{L}$ and the full-rank prediction deviation of the finetuned model as $\vd \coloneqq \vx_0-\mathrm{stopgrad}(\hat{\vx}_0)$. The variance-reduced loss residual is then $\vd - \lambda\vd^{\mathrm{L}}$. Minimizing the patch loss over $\lambda$ gives the one-dimensional least-squares solution:
\begin{equation}
    \lambda^\star
    =
    \argmin_\lambda \|\vd - \lambda\vd^{\mathrm{L}}
    \|^2
    =
    \frac{\langle \vd, \vd^{\mathrm{L}}\rangle}{\|\vd^{\mathrm{L}}\|^2}.
    \label{eq:app_lambda_solution}
\end{equation}
Geometrically, this subtracts the component of the full-pixel prediction deviation that lies along the low-rank prediction deviation, leaving the smallest possible loss residual within this one-dimensional family. In practice, we use the clamped coefficient $\lambda=\min(\max(\lambda^\star,0),1)$.

\subsection{Perceptual Correction}
\label{app:perceptual}

The variance-reduced loss in Eq.~(\ref{eq:anchored_loss}) uses the approximation $\E[\vx_0^\mathrm{L}\mid\vx_t]\approx\E[\vx_0^\mathrm{L}\mid\vx_t^\mathrm{L}]$, as analyzed in Appendix~\ref{app:vr_deriv}. This approximation is valid when $\vx_t-\vx_t^\mathrm{L}\in\mathrm{Im}(\mI-\mP)$, which is guaranteed at $t=1$ because both inputs are pure noise. For $t<1$, this condition requires $\vx_0 - \vx_0^\mathrm{L} \in \mathrm{Im}(\mI - \mP)$, which generally does not hold, so the variance-reduction term $\lambda(\vx_0^\mathrm{L}-\hat{\vx}_0^\mathrm{L})$ can introduce approximation error in the low-rank subspace $\mathrm{Im}(\mP)$. Therefore, we need to reduce reliance on this term near the low-noise end of the trajectory.

Simply downweighting the variance-reduction term near low noise is not ideal, because the variance-reduced target is important for learning fine details. To compensate, we introduce a fading schedule $\omega_t\in[0,1]$ that interpolates from the variance-reduction term to an LPIPS~\cite{lpips} perceptual loss between $\hat{\vx}_0$ and $\vx_0$. The variance-reduction term in Eq.~(\ref{eq:anchored_loss}) is multiplied by $1-\omega_t$:
\begin{equation}
    \mathcal{L}_{\mathrm{VR}}
    =
    \E_{t,\vx_0,\bm{\epsilon}}\left[\frac{\left\| \vx_0 - \hat{\vx}_0 - (1-\omega_t)\lambda(\vx_0^\mathrm{L}-\hat{\vx}_0^\mathrm{L})\right\|^2}{\sigma_t^2}\right],
    \label{eq:app_faded_vr_loss}
\end{equation}
while the complementary perceptual term is multiplied by $\omega_t$:
\begin{equation}
    \mathcal{L}_{\mathrm{P}}
    = \E_{t,\vx_0,\bm{\epsilon}}\left[
    \frac{\omega_t \lambda}{\sigma_t^2}\,
    \mathrm{LPIPS} \left(
    \hat{\vx}_0,\vx_0
    \right)
    \right].
    \label{eq:app_perceptual_loss}
\end{equation}
Here $\lambda$ is reused only as the patch-wise adaptive gate for the perceptual correction, and $1/\sigma_t^2$ recovers velocity-space weighting. 

In our implementation, we define $\omega_t$ as a shifted signal-ratio schedule:
\begin{equation}
    \omega_t
    =
    \frac{\alpha_t^2}{\alpha_t^2 + (\kappa\sigma_t)^2},
    \label{eq:app_shifted_signal_ratio}
\end{equation}
where $\kappa$ is a shift hyperparameter~\cite{esser2024sd3} that controls the transition. The final finetuning loss is
\begin{equation}
    \mathcal{L} = \mathcal{L}_{\mathrm{VR}} + \omega_\mathrm{P} \mathcal{L}_{\mathrm{P}},
\end{equation}
where $\omega_\mathrm{P}$ is a hyperparameter that controls the overall weight of the perceptual correction. In our experiments, we use $\kappa=0.3$ and $\omega_\mathrm{P}=0.2$. We did not perform a systematic hyperparameter sweep due to computational constraints, so there may be room for further improvement.

\section{Experiment Details}
\label{app:experiments}

\subsection{ImageNet Experiments}

For ImageNet 256\texttimes 256 experiments, we use the same architecture, optimizer, and other training hyperparameters as JiT-H/16 (see Table~9 of JiT~\cite{li2025jit}). Training for 600 epochs costs approximately 1750 NVIDIA H100 GPU hours. The REPA-enhanced variant follows the standard REPA setting~\cite{yu2024repa}: we apply the REPA loss to the features after the 8th transformer block with loss weight $0.5$.

At inference time, we set the velocity-recovery clamp to $\sigma_\mathrm{min}=0.04$, which performs better than the JiT default $\sigma_\mathrm{min}=0.05$ for both the JiT baseline and AsymFlow. Unless otherwise stated, all other inference settings follow JiT exactly, including the 50-step Heun ODE solver, class-balanced sampling, BF16 inference, and attention upcasting.

For each classifier-free guidance (CFG)~\cite{cfg} result, we grid-search the CFG scale with step size $0.1$ and the guidance interval with step size $0.02$~\cite{intervalcfg}. Table~\ref{tab:app_imagenet_cfg} lists the selected settings for Fig.~\ref{fig:rank_fid}. The final AsymFlow result in Table~\ref{tab:sigma_clamp} uses CFG scale $2.3$ and interval $[0,0.88]$, while the REPA-enhanced result in Table~\ref{tab:imagenet_compare} uses CFG scale $2.2$ and interval $[0,0.88]$.

\begin{table}[h]
\centering
\caption{\textbf{Guidance settings for the ImageNet patch-rank sweep.} These settings are selected by grid-searching guided FID for each rank.}
\label{tab:app_imagenet_cfg}
\begin{tabular}{lcc}
\toprule
\textbf{Patch rank} $r$ & \textbf{CFG scale} & \textbf{Guidance interval} \\
\midrule
0 & 2.7 & $[0,0.82]$ \\
2 & 2.6 & $[0,0.82]$ \\
4 & 2.6 & $[0,0.82]$ \\
8 & 2.5 & $[0,0.82]$ \\
16 & 2.7 & $[0,0.82]$ \\
32 & 2.7 & $[0,0.82]$ \\
8 (random subspace) & 2.8 & $[0,0.82]$ \\
\bottomrule
\end{tabular}
\end{table}

\subsection{Text-to-Image Experiments}

For text-to-image experiments, we represent pixels in Oklab color space~\cite{oklab} because of its perceptual uniformity, then normalize the values to mean $0$ and standard deviation $1$ before Procrustes alignment and scale calibration. The patch size is $16$, matching the ImageNet model. Thus the pixel patch dimension is $D=16\times16\times3=768$, while the AsymFlow rank follows the original FLUX.2 latent dimension, $r=d=128$.

We finetune on a 3M subset of LAION-Aesthetics images~\cite{laion}, curated with safety and aesthetics filters.
The images are resized to one-megapixel resolution and captioned with Qwen2.5-VL~\cite{qwen25vl}. To reduce overfitting and preserve the pretrained model, we freeze the base weights and update only the input/output projection layers together with rank-256 LoRA adapters~\cite{hu2022lora}. The trained modules are:
\begin{compactitem}
    \item \texttt{x\_embedder}, \texttt{proj\_out}, and \texttt{norm\_out};
    \item rank-256 LoRA adapters with dropout $0.05$ on \texttt{*.ff.linear\_in}, \texttt{*.ff.linear\_out}, \texttt{*.ff\_context.linear\_in}, \texttt{*.ff\_context.linear\_out}, \texttt{timestep\_embedder.linear\_1}, \texttt{timestep\_embedder.linear\_2}, and \texttt{single\_transformer\_blocks.*.attn.to\_out}.
\end{compactitem}

Optimization uses 8-bit Adam~\cite{adam,dettmers20228bit} with batch size $256$, betas $(0.9,0.95)$, learning rate $10^{-4}$ for all trainable parameters (except that \texttt{proj\_out} uses $10^{-3}$). The final model used in the system comparison is trained for 15K iterations, costing approximately 1100 NVIDIA H100 GPU hours. For evaluation, we use the exponential moving average (EMA) of the finetuned weights with the dynamic EMA schedule of~\citet{karras2024analyzingimprovingtrainingdynamics} (using the hyperparameter $\gamma=7.0$). Sampling uses UniPC~\cite{unipc} with APG orthogonal-projection guidance~\cite{apg}. 
At each sampling step, we convert the denoised pixels to RGB color space and clamp the values to the valid range before converting them back to Oklab velocity. Table~\ref{tab:app_t2i_hparams} summarizes the main text-to-image settings.

\begin{table}[h]
\centering
\caption{\textbf{Text-to-image finetuning and evaluation settings.}}
\label{tab:app_t2i_hparams}
\scalebox{0.8}{%
    \begin{tabular}{ll}
    \toprule
    \textbf{Setting} & \textbf{Value} \\
    \midrule
    Pixel color space & Normalized Oklab~\cite{oklab} \\
    Patch size & 16 \\
    Patch dimension $D$ & 768 \\
    Patch rank $r$ & 128 \\
    Subspace construction & Orthogonal Procrustes lift with scale calibration \\
    LoRA rank / dropout & 256 / 0.05 \\
    Flow shift~\cite{esser2024sd3} & 17.0 \\
    \midrule
    Training resolution & 1MP with mixed aspect ratios \\ 
    Pre-shift time sampling & $\mathrm{LogitNormal}(0, 1)$ \\ 
    Optimizer & 8-bit Adam~\cite{adam,dettmers20228bit} \\
    Learning rate & $10^{-4}$ ($10^{-3}$ for \texttt{proj\_out}) \\
    Adam betas & (0.9, 0.95) \\
    Weight decay & 0.0 \\
    Batch size & 256 \\
    Training iterations & 15K iterations \\
    EMA & Dynamic EMA, $\gamma=7.0$~\cite{karras2024analyzingimprovingtrainingdynamics} \\
    \midrule
    Sampler & UniPC~\cite{unipc} \\
    Guidance scale & 4.0 with APG orthogonal projection~\cite{apg} \\
    Sampling steps & 32 \\
    \bottomrule
    \end{tabular}
}
\vspace{2ex}
\end{table}

\textbf{Latent baseline.} For the latent finetuning baseline, we use its native flow shift of 7.0. Other settings are the same as AsymFlow for strict comparability.

\textbf{DDT baseline.} For the DDT pixel finetuning baseline, the DDT head uses two transformer blocks with a wider dimension of 32 attention heads \texttimes 192 features per head, similar to the RAE design~\cite{zheng2025rae}. We use the same $\mA$ matrix as AsymFlow to initialize the input projection layer of the backbone, which closes the input gap and significantly improves the DDT baseline over a random initialization. The DDT head, input/output layers, and LoRA adapters are trained using a common learning rate of $10^{-4}$. Other settings are the same as AsymFlow for strict comparability.

\textbf{Inference time.}
AsymFLUX.2 klein uses the same number of tokens as the original FLUX.2 klein, so the per-step running time stays exactly the same as the original latent model. Since VAE is not used, the overall generation speed is marginally faster than the latent model.

\section{Mathematical Derivations}
\label{app:math}

\subsection{AsymFlow Decomposition and Recovery}
\label{app:asymflow_deriv}

We first make explicit the rank-$r$ projector properties used throughout the paper. The columns of $\mA\in\R^{D\times r}$ form an orthonormal basis for the chosen low-rank subspace, so $\mA\T\mA=\mI_r$. This orthonormality makes $\mP=\mA\mA\T$ the orthogonal projector onto that subspace. Applying $\mP$ twice is the same as applying it once, so $\mP^2=\mP$. The complementary projector $\mI-\mP$ removes everything in the low-rank subspace, which gives $(\mI-\mP)\mP=\bm{0}$. Together, these properties mean that any vector can be cleanly separated into a low-rank component and an orthogonal component. The notation is summarized as:
\begin{equation}
    \mA\in\R^{D\times r},
    \qquad
    \mA\T\mA=\mI_r,
    \qquad
    \mP=\mA\mA\T,
    \qquad
    \mP^2=\mP,
    \qquad
    (\mI-\mP)\mP=\bm{0}.
    \label{eq:app_projector_identities}
\end{equation}

We now restate the two targets in this notation. The standard velocity target combines full Gaussian noise with the data term. AsymFlow keeps the same full data term, but applies the projector only to the noise term:
\begin{equation}
    \vu \coloneqq \bm{\epsilon}-\vx_0,
    \qquad
    \vu_\mathrm{A} \coloneqq \mP\bm{\epsilon}-\vx_0.
    \label{eq:app_asymflow_target}
\end{equation}

\textbf{Component decomposition.}
Projecting $\vu_\mathrm{A}$ onto the low-rank subspace gives the true low-rank velocity. This branch of AsymFlow is still a velocity target. It contains low-rank noise minus low-rank data:
\begin{equation}
    \mP\vu_\mathrm{A}
    =
    \mP(\mP\bm{\epsilon}-\vx_0)
    =
    \mP\bm{\epsilon}-\mP\vx_0
    =
    \mP(\bm{\epsilon}-\vx_0)
    =
    \mP\vu .
    \label{eq:app_lowrank_component}
\end{equation}
Projecting $\vu_\mathrm{A}$ onto the orthogonal complement removes the noise term entirely. This branch is no longer a velocity target. It is the orthogonal clean-data component up to a minus sign:
\begin{equation}
    (\mI-\mP)\vu_\mathrm{A}
    =
    (\mI-\mP)(\mP\bm{\epsilon}-\vx_0)
    =
    -(\mI-\mP)\vx_0 .
    \label{eq:app_orthogonal_component}
\end{equation}
Together, Eqs.~(\ref{eq:app_lowrank_component}) and~(\ref{eq:app_orthogonal_component}) show that AsymFlow is velocity-like in $\mathrm{Im}(\mP)$ and $\vx_0$-like in $\mathrm{Im}(\mI-\mP)$.

\textbf{Recovery rule.}
The same decomposition gives an exact route from the asymmetric target back to the standard velocity target. The low-rank branch is already in velocity form, so this component is kept directly:
\begin{equation}
    \mP\vu
    =
    \mP\vu_\mathrm{A}.
    \label{eq:app_lowrank_recovery}
\end{equation}
The orthogonal branch is different. Since Eq.~(\ref{eq:app_orthogonal_component}) says that $(\mI-\mP)\vu_\mathrm{A}$ equals the negative clean-data component, the orthogonal clean data is obtained by changing the sign:
\begin{equation}
    (\mI-\mP)\vx_0
    =
    -(\mI-\mP)\vu_\mathrm{A}.
    \label{eq:app_orthogonal_x0}
\end{equation}
This clean-data component is then converted to velocity using the usual $\vx_0$-to-$\vu$ relation. The orthogonal velocity is obtained by subtracting clean data from the noisy input and dividing by the noise level:
\begin{equation}
    (\mI-\mP)\vu
    =
    (\mI-\mP)\frac{\vx_t-\vx_0}{\sigma_t}
    =
    (\mI-\mP)\frac{\vx_t+\vu_\mathrm{A}}{\sigma_t}.
    \label{eq:app_orthogonal_recovery}
\end{equation}
Combining the direct low-rank velocity branch with the converted orthogonal branch gives the full-rank velocity target:
\begin{equation}
    \vu
    =
    \mP\vu_\mathrm{A}
    +
    (\mI-\mP)\frac{\vx_t+\vu_\mathrm{A}}{\sigma_t}.
    \label{eq:app_full_recovery}
\end{equation}
Thus, the asymmetric target itself contains enough information to reconstruct the standard full-rank velocity target exactly.

\textbf{Endpoint cases.}
The rank controls how much of the target is velocity-like. At rank zero, the projector is zero, so AsymFlow becomes full $\vx_0$-prediction up to sign. At full rank, the projector is the identity, so AsymFlow becomes standard velocity prediction:
\begin{equation}
    r=0 \; \Longrightarrow \; \mP=\mO,\ \vu_\mathrm{A}=-\vx_0,
    \qquad
    r=D \; \Longrightarrow \; \mP=\mI,\ \vu_\mathrm{A}=\bm{\epsilon}-\vx_0=\vu.
    \label{eq:app_endpoints}
\end{equation}

\subsection{Latent--Pixel Flow Coupling at Initialization}
\label{app:latent_pixel_equiv}

We next show the trajectory coupling relationship that makes latent-to-pixel initialization exact: when the latent and lifted pixel ODEs start from paired noise, the entire low-rank pixel trajectory can be lifted from the latent trajectory plus the analytically determined orthogonal noise component. This trajectory coupling holds for both scale-calibrated (Appendix~\ref{app:calibration}) and uncalibrated AsymFlows. Below we analyze the uncalibrated version for simplicity.

Let $\vz_0\in\R^d$ denote a latent token, where $d$ is the latent dimension. In this construction we choose the pixel low-rank subspace to have the same rank $r=d$, and use a linear lift $\mA\in\R^{D\times d}$ from latent tokens to pixel patches. As before, the columns of $\mA$ are orthonormal, so $\mA\T\mA=\mI_d$ and $\mP=\mA\mA\T$ projects onto the latent-induced pixel subspace. The lifted low-rank pixel target is $\vx_0^\mathrm{L} \coloneqq \mA\vz_0$, and projecting pixel noise back through $\mA\T$ gives the latent noise $\bm{\epsilon}_\vz \coloneqq \mA\T\bm{\epsilon}$. The notation is summarized as:
\begin{equation}
    \mA\in\R^{D\times d},
    \qquad
    \mA\T\mA=\mI_d,
    \qquad
    \mP=\mA\mA\T,
    \qquad
    \vx_0^\mathrm{L} \coloneqq \mA\vz_0,
    \qquad
    \bm{\epsilon}_\vz \coloneqq \mA\T\bm{\epsilon}.
    \label{eq:app_latent_pixel_setup}
\end{equation}
With these definitions, projecting the lifted low-rank pixel process recovers the pretrained latent process.

\textbf{Input identity.}
The pixel forward process diffuses the lifted low-rank pixels with full-rank pixel-space noise:
\begin{equation}
    \vx_t^\mathrm{L}
    \coloneqq
    \alpha_t\vx_0^\mathrm{L}+\sigma_t\bm{\epsilon}
    =
    \alpha_t\mA\vz_0+\sigma_t\bm{\epsilon}.
    \label{eq:app_lowrank_forward}
\end{equation}
Projecting this noisy pixel sample by $\mA\T$ gives exactly the corresponding noisy latent sample:
\begin{equation}
    \mA\T\vx_t^\mathrm{L}
    =
    \alpha_t\mA\T\mA\vz_0+\sigma_t\mA\T\bm{\epsilon}
    =
    \alpha_t\vz_0+\sigma_t\bm{\epsilon}_\vz
    =
    \vz_t.
    \label{eq:app_input_identity}
\end{equation}
Thus, the lifted pixel model evaluates the pretrained latent network at the paired noisy latent state.

\textbf{Output identity.}
The latent model predicts latent velocity $\vu_\vz \coloneqq \bm{\epsilon}_\vz-\vz_0$. Lifting this prediction to pixel space gives an AsymFlow-like target for the low-rank pixels $\vx_0^\mathrm{L}$:
\begin{equation}
    \mA\vu_\vz
    =
    \mA(\bm{\epsilon}_\vz-\vz_0)
    =
    \mA\mA\T\bm{\epsilon}-\mA\vz_0
    =
    \mP\bm{\epsilon}-\vx_0^\mathrm{L}.
    \label{eq:app_lifted_latent_velocity}
\end{equation}
Therefore the low-rank pixel velocity $\vu^\mathrm{L} \coloneqq \bm{\epsilon}-\vx_0^\mathrm{L}$ is obtained by applying the same recovery rule from Sec.~\ref{app:asymflow_deriv} with $\vu_\mathrm{A}=\mA\vu_\vz$ and $\vx_t=\vx_t^\mathrm{L}$:
\begin{equation}
    \vu^\mathrm{L} 
    =
    \mP\mA\vu_\vz
    +
    (\mI-\mP)\frac{\vx_t^\mathrm{L}+\mA\vu_\vz}{\sigma_t}.
    \label{eq:app_output_identity}
\end{equation}
For analyzing the lifted latent initialization, this expression can be simplified because the lifted latent prediction already lies in the low-rank subspace, so we have $(\mI-\mP)\mA\vu_\vz=\bm{0}$. This gives
\begin{equation}
    \vu^\mathrm{L}
    =
    \mA\vu_\vz
    +
    \frac{(\mI-\mP)\vx_t^\mathrm{L}}{\sigma_t}.
    \label{eq:app_output_identity_init_simplified}
\end{equation}
Thus, at initialization, the low-rank branch is exactly the lifted latent velocity, while the orthogonal branch is recovered directly from the current noisy pixel state. Note that this simplification does not apply to the finetuned AsymFlow model and should not be used in the implementation.

\textbf{Trajectory coupling.}
The identities above are pointwise statements about the noisy input and the recovered velocity. What we need for initialization is slightly stronger: if the latent model and the lifted pixel model are solved in parallel from paired noise, then their whole trajectories remain paired, and their final samples still satisfy the same lifting relation.
\begin{theorem}
\label{thm:trajectory_consistency}
Let $\bm{\epsilon}\in\R^D$ be a pixel-space noise sample and let $\bm{\epsilon}_\vz=\mA\T\bm{\epsilon}$ be its low-rank projection. Let $G_{\bm{\phi}}$ denote the pretrained latent flow velocity network. Consider the latent flow ODE on $(0,1]$:
\begin{equation}
    \frac{\diff \vz_t}{\diff t}
    =
    G_{\bm{\phi}}(\vz_t,t),
    \qquad
    \vz_1=\bm{\epsilon}_\vz,
    \label{eq:app_latent_ode}
\end{equation}
and the lifted pixel flow ODE obtained by applying the simplified form in Eq.~(\ref{eq:app_output_identity_init_simplified}) to the latent network output:
\begin{equation}
    \frac{\diff \vx_t^\mathrm{L}}{\diff t}
    =
    \mA G_{\bm{\phi}}(\mA\T\vx_t^\mathrm{L},t)
    +
    \frac{(\mI-\mP)\vx_t^\mathrm{L}}{\sigma_t},
    \qquad
    \vx_1^\mathrm{L}=\bm{\epsilon}.
    \label{eq:app_lifted_pixel_ode}
\end{equation}
Then the two trajectories satisfy
\begin{equation}
    \vx_t^\mathrm{L}
    =
    \mA\vz_t+\sigma_t(\mI-\mP)\bm{\epsilon}
    \quad\text{for all }t\in(0,1].
    \label{eq:app_trajectory_identity}
\end{equation}
In particular, taking $t\to0$ gives the final sample identity $\vx_0^\mathrm{L}=\mA\vz_0$.
\end{theorem}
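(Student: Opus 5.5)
The plan is a verification-plus-uniqueness argument. I will write down the candidate $\vy_t \coloneqq \mA\vz_t+\sigma_t(\mI-\mP)\bm{\epsilon}$ built from the latent solution $\vz_t$ of Eq.~(\ref{eq:app_latent_ode}), and show that it satisfies the lifted pixel ODE in Eq.~(\ref{eq:app_lifted_pixel_ode}) with the same terminal value $\bm{\epsilon}$ at $t=1$. Then uniqueness of solutions to the ODE forces $\vx_t^\mathrm{L}=\vy_t$ on $(0,1]$.

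\textbf{Terminal condition.} At $t=1$ we have $\sigma_1=1$, so
\[
\vy_1=\mA\mA\T\bm{\epsilon}+(\mI-\mP)\bm{\epsilon}=\bm{\epsilon}.
\]

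\textbf{Two projections of the candidate.} Using $\mA\T\mA=\mI_d$ and $\mA\T(\mI-\mP)=\mA\T-\mA\T\mA\mA\T=\bm{0}$, I get $\mA\T\vy_t=\vz_t$. This is the trajectory-level analogue of the input identity Eq.~(\ref{eq:app_input_identity}). Since $(\mI-\mP)\mA=\bm{0}$ and $(\mI-\mP)^2=\mI-\mP$, I also get $(\mI-\mP)\vy_t=\sigma_t(\mI-\mP)\bm{\epsilon}$.

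\textbf{Matching the right-hand side.} Substituting these two identities, the right-hand side of Eq.~(\ref{eq:app_lifted_pixel_ode}) evaluated at $\vy_t$ becomes
\[
\mA G_{\bm{\phi}}(\vz_t,t)+(\mI-\mP)\bm{\epsilon}.
\]
Differentiating the candidate directly, with $\diff\sigma_t/\diff t=1$ under the linear schedule, gives
\[
\frac{\diff \vy_t}{\diff t}=\mA\,\frac{\diff\vz_t}{\diff t}+(\mI-\mP)\bm{\epsilon}=\mA G_{\bm{\phi}}(\vz_t,t)+(\mI-\mP)\bm{\epsilon},
\]
so the two agree. Equivalently, the ODE decouples: the $\mathrm{Im}(\mP)$ coordinates $\mA\T\vx_t^\mathrm{L}$ obey exactly the latent ODE, and the complement obeys the linear ODE $\dot{\vw}=\vw/t$, whose solution is $\vw_t=t\,\vw_1$.

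\textbf{Main obstacle: uniqueness.} This is the only delicate step. The vector field has the singular term $(\mI-\mP)\vx/\sigma_t$, but on any interval $[t_0,1]$ with $t_0>0$ this term is linear with bounded coefficient. If $G_{\bm{\phi}}$ is locally Lipschitz in its first argument, which is standard for a neural network with smooth activations, Picard–Lindelöf applies on every such compact interval, and uniqueness on $(0,1]$ follows by letting $t_0\downarrow0$. I will state this regularity assumption explicitly. Alternatively, I can run the argument through the decoupled components: $\mA\T\vx_t^\mathrm{L}$ solves the latent ODE with the same terminal value $\mA\T\bm{\epsilon}=\bm{\epsilon}_\vz$, and the complement solves a scalar linear ODE, each with a unique solution.

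\textbf{Limit $t\to0$.} I will assume the latent trajectory extends continuously to $t=0$, which is implicit in defining the sample $\vz_0$. Then $\sigma_t\to0$ kills the orthogonal term, giving $\vx_0^\mathrm{L}=\mA\vz_0$.
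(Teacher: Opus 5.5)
Your proposal is correct and follows the paper's proof essentially step for step: you define the same candidate $\tilde{\vx}_t^\mathrm{L}=\mA\vz_t+\sigma_t(\mI-\mP)\bm{\epsilon}$, check its value at $t=1$, use $\mA\T\tilde{\vx}_t^\mathrm{L}=\vz_t$ and $(\mI-\mP)\tilde{\vx}_t^\mathrm{L}=\sigma_t(\mI-\mP)\bm{\epsilon}$ to match the lifted vector field with the candidate's derivative, and conclude by ODE uniqueness before letting $t\to0$. Your added assumptions are sound tightenings of points the paper leaves implicit: local Lipschitz continuity of $G_{\bm{\phi}}$, with Picard--Lindel\"of applied on compact intervals $[t_0,1]$ to avoid the $1/\sigma_t$ singularity, and continuity of $\vz_t$ at $t=0$.
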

\begin{proof}
For brevity, write the orthogonal noise component as $\bm{\epsilon}^{\perp} \coloneqq (\mI-\mP)\bm{\epsilon}$.
Then the pixel noise decomposes into the lifted latent noise plus the orthogonal residual:
\begin{equation}
    \bm{\epsilon}
    =
    \mP \bm{\epsilon} + (\mI-\mP)\bm{\epsilon}
    =
    \mA\mA\T\bm{\epsilon} + \bm{\epsilon}^{\perp}
    =
    \mA\bm{\epsilon}_\vz + \bm{\epsilon}^{\perp}.
    \label{eq:app_noise_decomposition}
\end{equation}
At $t=1$, this decomposition matches the two ODE initial conditions:
\begin{equation}
    \vx_1^\mathrm{L}
    =
    \mA\vz_1+\sigma_1\bm{\epsilon}^{\perp}.
    \label{eq:app_initial_identity}
\end{equation}
Now define a candidate lifted pixel trajectory from the latent trajectory:
\begin{equation}
    \tilde{\vx}_t^\mathrm{L}
    \coloneqq
    \mA\vz_t+\sigma_t\bm{\epsilon}^{\perp}.
    \label{eq:app_candidate_lifted_path}
\end{equation}
We will show that this candidate trajectory satisfies the lifted pixel ODE in Eq.~(\ref{eq:app_lifted_pixel_ode}) with the same initial condition, so by uniqueness of ODE solutions, it must be identical to $\vx_t^\mathrm{L}$ for all $t$.
The candidate trajectory has exactly the input identity required by the latent network:
\begin{equation}
    \mA\T\tilde{\vx}_t^\mathrm{L}
    =
    \mA\T\mA\vz_t+\sigma_t\mA\T \bm{\epsilon}^{\perp}
    =
    \vz_t.
    \label{eq:app_candidate_projection}
\end{equation}
It also has an orthogonal component determined only by the fixed orthogonal noise:
\begin{equation}
    (\mI-\mP)\tilde{\vx}_t^\mathrm{L}
    =
    \sigma_t\bm{\epsilon}^{\perp}.
    \label{eq:app_candidate_orthogonal}
\end{equation}
Substituting these two identities into the lifted pixel vector field gives the lifted latent velocity plus the orthogonal noise velocity:
\begin{equation}
    \mA G_{\bm{\phi}}(\mA\T\tilde{\vx}_t^\mathrm{L},t)
    +
    \frac{(\mI-\mP)\tilde{\vx}_t^\mathrm{L}}{\sigma_t}
    =
    \mA G_{\bm{\phi}}(\vz_t,t)+\bm{\epsilon}^{\perp}.
    \label{eq:app_lifted_pixel_rhs}
\end{equation}
The derivative of the candidate trajectory gives the same expression:
\begin{equation}
    \frac{\diff \tilde{\vx}_t^\mathrm{L}}{\diff t}
    =
    \mA\frac{\diff \vz_t}{\diff t}
    +
    \frac{\diff\sigma_t}{\diff t}\bm{\epsilon}^{\perp}
    =
    \mA G_{\bm{\phi}}(\vz_t,t)+\bm{\epsilon}^{\perp},
    \label{eq:app_lifted_path_derivative}
\end{equation}
where we used Eq.~(\ref{eq:app_latent_ode}) and $\sigma_t=t$. Thus $\tilde{\vx}_t^\mathrm{L}$ satisfies the lifted pixel ODE in Eq.~(\ref{eq:app_lifted_pixel_ode}). Since it also has the same value as $\vx_t^\mathrm{L}$ at $t=1$, uniqueness of the ODE solution gives
\begin{equation}
    \vx_t^\mathrm{L}
    =
    \tilde{\vx}_t^\mathrm{L}
    =
    \mA\vz_t+\sigma_t(\mI-\mP)\bm{\epsilon}
    \quad\text{for all }t\in(0,1].
    \label{eq:app_trajectory_identity_proof}
\end{equation}
Finally, taking $t\to0$ gives $\vx_0^\mathrm{L} = \mA\vz_0.$
\end{proof}
The same argument applies to Euler discretization with a shared time grid: if the relation holds before a step, the latent update changes the low-rank component by $\Delta t\,\mA G_{\bm{\phi}}(\vz_t,t)$, while the lifted pixel update additionally changes the orthogonal component by $\Delta t\,\bm{\epsilon}^{\perp}$, preserving the same paired form after the step; by induction, the relation holds at all steps. Thus, at network initialization, the lifted latent model is an exact low-rank pixel flow model. Note that this initialization is not yet a full AsymFlow model on real pixels, as finetuning replaces the lifted low-rank data target $\vx_0^\mathrm{L}$ with the full-rank pixel target $\vx_0$.

\subsection{Details on Variance-Reduced Loss}
\label{app:vr_deriv}

The variance-reduced loss in Sec.~\ref{sec:anchored_fm} can be viewed as a control variate. The paired low-rank target $\vx_0^\mathrm{L}$ is correlated with the full pixel target $\vx_0$, and a frozen initialized low-rank model gives a good estimate of it. We use this paired target to reduce the variance of the pixel residual without changing the conditional mean target.

The exact control-variate identity is
\begin{equation}
    \E\!\left[
    \vx_0^\mathrm{L}
    -
    \E[\vx_0^\mathrm{L}| \vx_t]
    \,\middle|\,
    \vx_t
    \right]
    =
    \bm{0}.
    \label{eq:app_control_variate_zero_mean}
\end{equation}
Therefore adding any coefficient times this zero-mean residual does not change the conditional target. The posterior mean remains unchanged, while the sampled target can have lower variance:
\begin{equation}
    \E\!\left[
    \vx_0
    -
    \lambda\bigl(\vx_0^\mathrm{L}-\E[\vx_0^\mathrm{L}| \vx_t]\bigr)
    \,\middle|\,
    \vx_t
    \right]
    =
    \E[\vx_0| \vx_t].
    \label{eq:app_control_variate_target}
\end{equation}
Before approximation, the objective is therefore equivalent to the standard flow matching loss in $\vx_0$ format (Eq.~(\ref{eq:fm_loss})). The only role of the additional term is to reduce sampling variance when the low-rank residual explains part of the full pixel residual.

In practice, the conditional mean $\E[\vx_0^\mathrm{L} | \vx_t]$ is unavailable. We approximate it using the frozen low-rank model prediction $\hat{\vx}_0^\mathrm{L}$ from the paired noisy low-rank sample:
\begin{equation}
    \vx_t^\mathrm{L}
    =
    \alpha_t\vx_0^\mathrm{L}+\sigma_t\bm{\epsilon},
    \qquad
    \E[\vx_0^\mathrm{L}| \vx_t]
    \approx
    \E[\vx_0^\mathrm{L}| \vx_t^\mathrm{L}]
    \approx
    \hat{\vx}_0^\mathrm{L}
    = \mP \vx_t^\mathrm{L} - \sigma_t \mA G_{\bm{\phi}}(\mA\T\vx_t^\mathrm{L},t).
    \label{eq:app_vr_approx}
\end{equation}
Substituting this approximation gives the practical variance-reduced loss in Eq.~(\ref{eq:anchored_loss}).

The approximation $\E[\vx_0^\mathrm{L}| \vx_t] \approx \E[\vx_0^\mathrm{L}| \vx_t^\mathrm{L}]$
is exact under the sufficient condition that the full noisy input and the paired low-rank noisy input differ only in the orthogonal complement. In that case, their low-rank components match, so the frozen low-rank model receives the same low-rank information:
\begin{equation}
    \vx_t-\vx_t^\mathrm{L}\in\mathrm{Im}(\mI-\mP)
    \; \Longrightarrow \;
    \mA\T \vx_t=\mA\T \vx_t^\mathrm{L}.
    \label{eq:app_vr_exact_condition}
\end{equation}
This requires either $t=1$ or $\vx_0 - \vx_0^\mathrm{L}\in\mathrm{Im}(\mI-\mP)$, which is generally not satisfied due to the non-linearity of the VAE encoder~\cite{rombach2022ldm}.
When this condition is not satisfied, the approximation error appears inside the low-rank subspace $\mathrm{Im}(\mP)$. To compensate for this, the perceptual correction is introduced in the low-noise regime in place of the variance reduction, as detailed in Sec.~\ref{app:perceptual}.

\section{Additional Qualitative Results}
\label{app:qualitative}

\begin{figure}[H]
\centering
\includegraphics[width=0.92\textwidth]{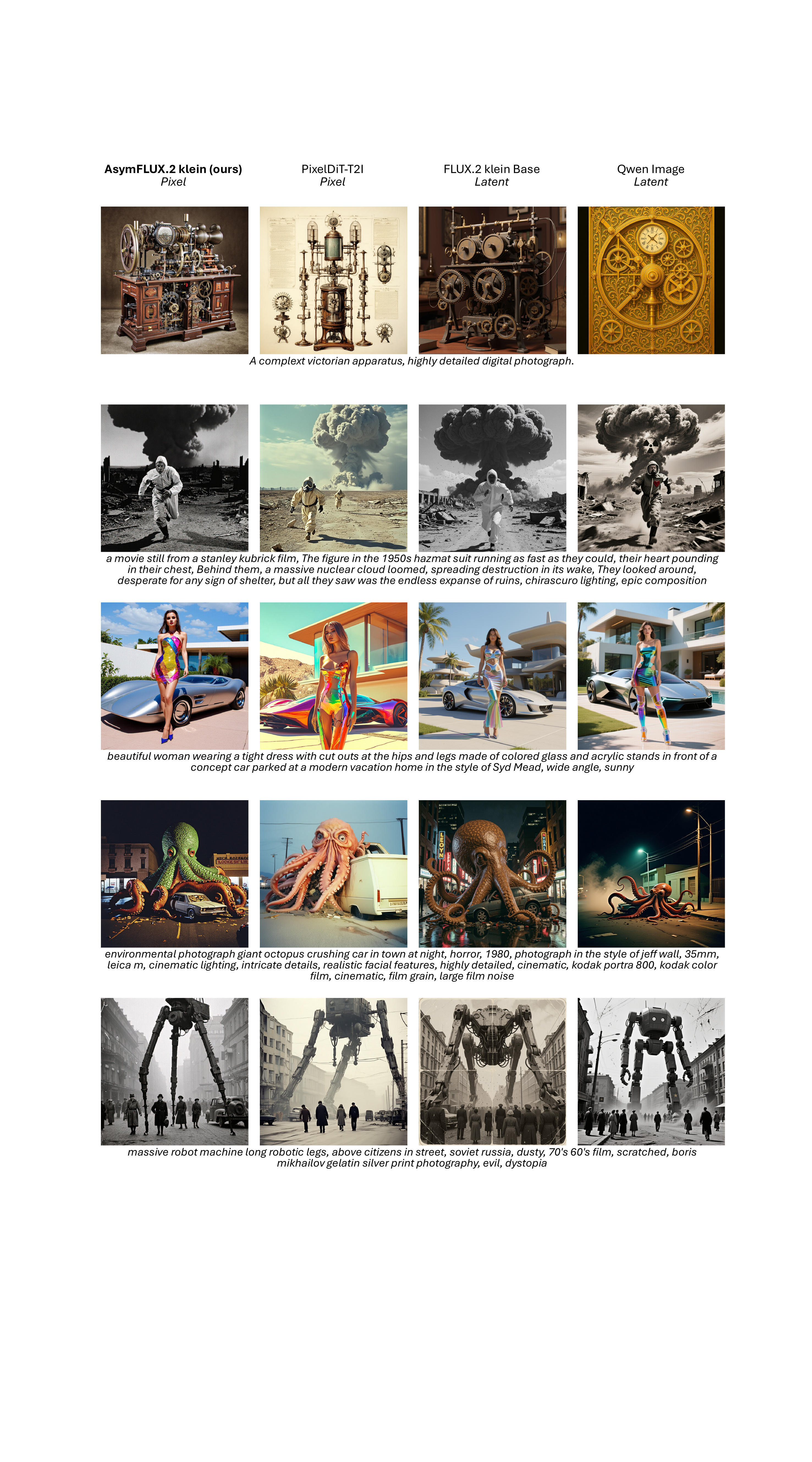}
\caption{Additional qualitative text-to-image comparisons (part A).}
\label{fig:supp_1}
\vspace{2ex}
\end{figure}

\begin{figure}[H]
\centering
\includegraphics[width=0.92\textwidth]{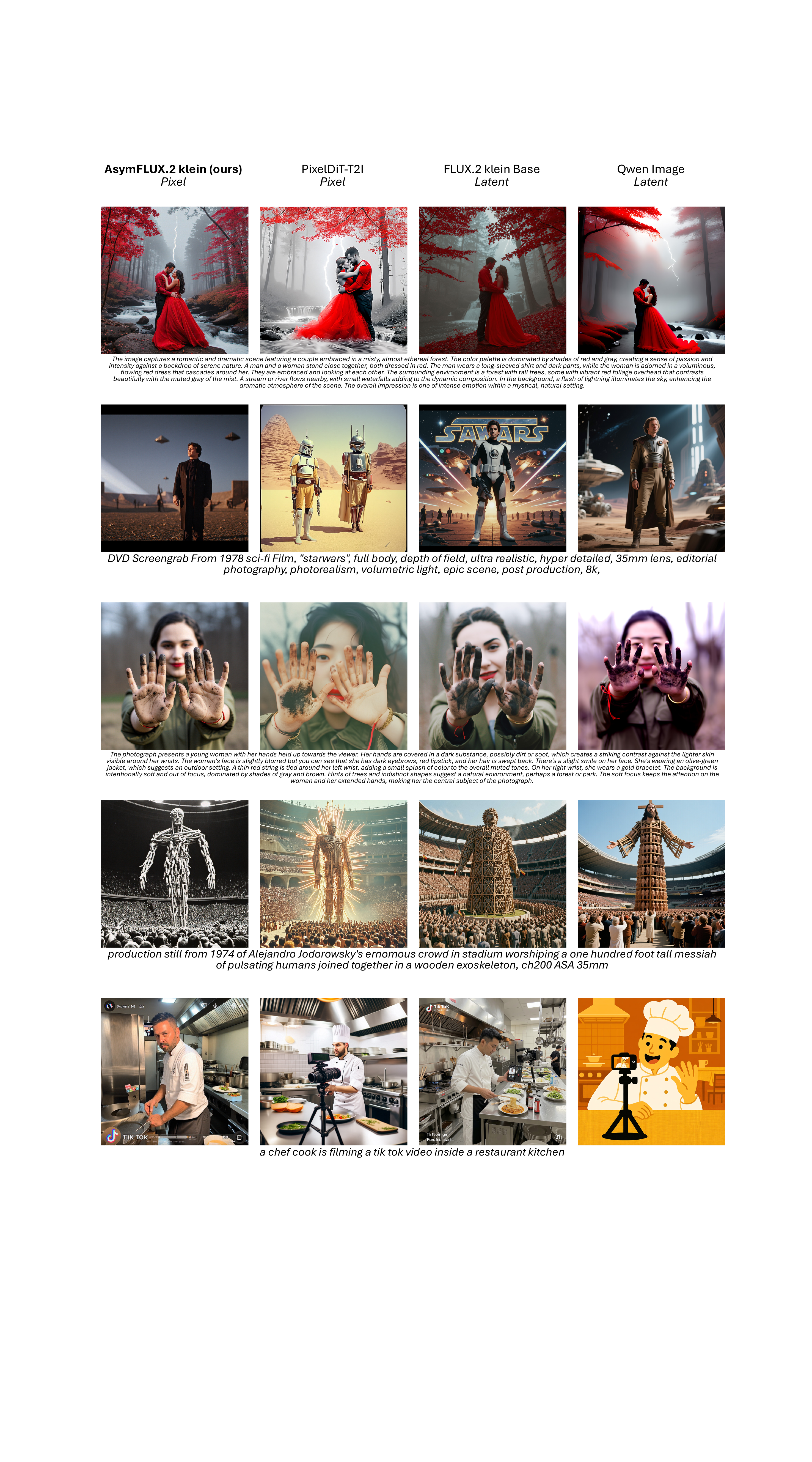}
\vspace{2ex}
\caption{Additional qualitative text-to-image comparisons (part B).}
\label{fig:supp_2}
\end{figure}

\section{Impact Statement}
\label{app:impact}

Our method enhances the photorealism of diffusion models, which significantly benefits creative industries by enabling high-fidelity prototyping and asset creation. This advancement, however, presents a dual-use challenge: more realistic imagery facilitates the creation of convincing disinformation or non-consensual media, increasing the potential for societal harm. Higher visual quality also requires renewed scrutiny of dataset biases, as those biases will be rendered more persuasively. We open-source our model to encourage scientific replication, but emphasize that responsible deployment requires the use of standard safety filters and content provenance tools (like watermarking) to manage these risks.

\end{document}